\tikzset{>=latex}
\DeclareMathOperator{\ReLU}{ReLU}
\newcommand{\R}{\mathbb{R}}
\newcommand{\Emm}{\mathcal{M}}
\newtheorem{theorem}{Theorem}[section]
\newtheorem{lemma}[theorem]{Lemma}
\newtheorem{corollary}[theorem]{Corollary}
\theoremstyle{definition}
\newtheorem{definition}[theorem]{Definition}
\theoremstyle{remark}
\newtheorem{remark}[theorem]{Remark}
\begin{document}

\twocolumn[
\icmltitle{Machines and Mathematical Mutations: Using GNNs to Characterize Quiver Mutation Classes}

\begin{icmlauthorlist}
\icmlauthor{Jesse He}{ucsd,pnnl}
\icmlauthor{Helen Jenne}{pnnl}
\icmlauthor{Herman Chau}{uw}
\icmlauthor{Davis Brown}{pnnl}
\icmlauthor{Mark Raugas}{pnnl}
\icmlauthor{Sara Billey}{uw}
\icmlauthor{Henry Kvinge}{pnnl,uw}
\end{icmlauthorlist}

\icmlaffiliation{ucsd}{Halıcıoğlu Data Science Institute, University of California San Diego, San Diego, CA, USA}
\icmlaffiliation{pnnl}{Pacific Northwest National Laboratory, Richland, WA, USA}
\icmlaffiliation{uw}{Department of Mathematics, University of Washington, Seattle, WA, USA}

\icmlcorrespondingauthor{Jesse He}{jeh020@ucsd.edu}
\icmlcorrespondingauthor{Henry Kvinge}{henry.kvinge@pnnl.gov}

\icmlkeywords{Machine Learning, Graph Neural Networks, Explainability, Combinatorics}

\vskip 0.3in
]

\printAffiliationsAndNotice{}

\begin{abstract}
    Machine learning is becoming an increasingly valuable tool in mathematics, enabling one to identify subtle patterns across collections of examples so vast that they would be impossible for a single researcher to feasibly review and analyze. In this work, we use graph neural networks to investigate \emph{quiver mutation}---an operation that transforms one quiver (or directed multigraph) into another---which is central to the theory of cluster algebras with deep connections to geometry, topology, and physics. In the study of cluster algebras, the question of \emph{mutation equivalence} is of fundamental concern: given two quivers, can one efficiently determine if one quiver can be transformed into the other through a sequence of mutations? In this paper, we use graph neural networks and AI explainability techniques to independently discover mutation equivalence criteria for quivers of type $\tilde{D}$. Along the way, we also show that even without explicit training to do so, our model captures structure within its hidden representation that allows us to reconstruct known criteria from type $D$, adding to the growing evidence that modern machine learning models are capable of learning abstract and parsimonious rules from mathematical data.
\end{abstract}
\section{Introduction}

Examples play a fundamental role in the mathematical research workflow. Exploration of a large number of examples builds intuition, supports or disproves conjectures, and points the way towards patterns that are later formalized as theorems. While computer-aided simulation has long played an important role in mathematics research, modern machine learning tools like deep neural networks have only recently begun to be more broadly applied. From the other direction, increasing attention in machine learning has been given to whether complex models like neural networks can learn to \emph{reason} when faced with mathematical or algorithmic tasks. For example, the phenomenon of \emph{algorithmic alignment} in graph neural networks is known to improve the sample complexity of such networks when compared to networks with similar expressive power \cite{dudzik2022graph, xu2019can}, and graph neural networks remain competitive with transformers for recognizing local subgraph structures \cite{sanford2024understanding}.

Of course, working mathematicians often need more than just a model that achieves high accuracy on a given task. In many cases, a model is only useful if it learns features that can provide insight to a mathematician. Further, one must be able to extract this insight from the model. In this work, we consider the specific problem of characterizing \emph{quiver mutation equivalence classes}. We show first that in this setting a graph neural network learns representations that align with known non-trivial mathematical theory. We then show how one can use a performant model to generate a concise conjecture, which we then prove.

Introduced by Fomin and Zelevinsky in \cite{fomin2001clusteralgebrasifoundations}, {\emph{quiver mutation}} is a combinatorial operation on quivers (directed multigraphs) which arises from the notion of a cluster algebra, an algebraic construction with deep connections to geometry and physics. Quiver mutation defines an equivalence relation on quivers (that is, it partitions the set of quivers into disjoint subsets). Two quivers belong to the same equivalence class if we can apply some appropriate sequence of quiver mutations to the first and obtain the second. Identifying whether such a sequence of mutations exists is generally a hard problem \cite{Soukup_2023}. In some cases, however, a concise characterization which involves checking some simple conditions is known. For example, \cref{thm:type-A} \cite{buan2008derived} and \cref{thm:type-d-classification} \cite{vatne2008mutationclassdnquivers} tell us that we can check whether a quiver belongs to types $A$ or $D$ by verifying certain conditions relating to the presence of specific structural motifs. \citet{henrich2011} provides a complete description of Dynkin and affine Dynkin type quivers in terms of certain families of infinite graphs.

We train a graph neural network (GNN) on a dataset consisting of $\sim 70,000$ quivers labeled with one of six different types ($A$, $D$, $E$, $\tilde{A}$, $\tilde{D}$, $\tilde{E}$). We find that not only does the resulting model achieve high accuracy, it also extracts features from type $D$ quivers that align with the characterization from \cite{vatne2008mutationclassdnquivers}. We identify the latter through a careful application of model explainability tools and exploration of hidden activations. Pushing this further, we carefully probe the hidden representations of type $\tilde{D}$, independently achieving a similar characterization\footnote{After initially circulating our completed manuscript within the community, we were alerted that \cite{henrich2011} had already proved the theorem we discovered in somewhat different language.}. We describe how insights gained through the clustering of hidden activations and other explainability tools helped us prove our characterization of type $\tilde{D}$ quivers (\cref{thm:main-theorem}). This provides yet another example of how machine learning can be a valuable tool for the research mathematician.

In summary, this work’s contributions include the following:
(1) We describe an application of graph neural networks to the problem of characterizing mutation equivalence classes of quivers.
(2) Using AI explainability techniques, we provide strong evidence that our model learned features which align with human-developed characterizations of quivers of type $D$ from the mathematical literature.
(3) Using insights gained from interpreting our model, we independently conjecture and then prove a characterization of quivers of type $\tilde{D}$ in terms of certain subquivers.

\section{Background and Related Work}
\label{sec:related_work}
Quivers and quiver mutations are central in the combinatorial study of \emph{cluster algebras}, a relatively new but active research area with connections to diverse areas of mathematics. For a high-level discussion of quiver mutation in the broader context of cluster algebras, see \cref{appdx:cluster-algebras}. Since quivers and quiver mutation can be studied independently of their algebraic origin, we have written the rest of the paper so that it does not depend on this background.

\subsection{Mutation-Finite Quivers}

In \cite{Fomin_2003}, Fomin and Zelevinsky gave a complete classification of finite cluster algebras, which can be generated by a finite number of variables. (Equivalently, their associated quiver mutation classes are finite). Amazingly, they correspond exactly to the Cartan-Killing classification of semisimple Lie algebras.
Their result says that a quiver associated to a cluster algebra of finite type must be mutation equivalent to an orientation of a Dynkin diagram (Figure \ref{fig:dynkin-diagrams} for examples). 
However, this result does not give an algorithm for checking this. 
To answer this question, Seven \cite{seven2005recognizingclusteralgebrasfinite} gave a full description of the associated quivers by computing all minimal quivers of infinite type. Since then, several other researchers have provided explicit characterizations of particular mutation classes of quivers \cite{Bastian_2011, buan2008derived, vatne2008mutationclassdnquivers}. Our main result follows these: we give an explicit characterization of quivers of type $\tilde{D}_n$, akin to the characterization of quivers of type $D_n$ given in \cite{vatne2008mutationclassdnquivers}. This differs subtly from \cite{henrich2011}, which characterizes finite-type quivers as subgraphs of certain families of infinite graphs.

\subsection{Mathematics and Machine Learning}

Machine learning has recently gained traction as a tool for mathematical research. Mathematicians have leveraged its ability to, among other things, identify patterns in large datasets. These emerging applications have included some within the field of cluster algebras \cite{cheung2022clustering, bao2020quiver, dechant2022cluster}. 
Unlike our work, this research does not aim to establish new theorems around mutation equivalence classes, focusing rather on the performance of models on different versions of this problem.
\citet{armstrongwilliams2024machinelearningmutationacyclicityquivers} obtain a result concerning the \emph{mutation-acyclicity} of quivers, though their theoretical result guides their ML investigation rather than the reverse. Though unrelated to cluster algebras, Davies et al. \cite{davies2021advancing} take an approach similar to the one taken here: using machine learning to guide mathematicians' intuition. They focus on two questions: one related to knot theory and one related to representation theory.

Due to the existence of unambiguous ground truth and known algorithmic solutions, there has also been renewed interest in using mathematical tasks to better analyze how machine learning models learn tasks at a mechanistic level, including the emergence of reasoning in large models. For example, in \cite{chughtai2023toy}, the authors use group operations to investigate the question of \emph{universality} in neural networks. Group multiplication is also used in \cite{stander2024grokking} to investigate the \emph{grokking} phenomenon. The idea of \emph{mechanistic interpretability}---explaining model behavior by identifying the role of small collections of neurons---is also demonstrated in \cite{zhong2023clock}, where Zhong et al. are able to recover two distinct algorithms from networks trained to perform modular arithmetic, and \cite{liu2023seeing}, where Liu et al. find evidence that a network trained to predict the product of two permutations learns group-theoretic structure.

\section{Preliminaries}

\subsection{Quivers and Quiver Mutation}
\label{sec:quivers}

In their work on cluster algebras \cite{fomin2001clusteralgebrasifoundations}, Fomin and Zelevinsky introduce the notion of \emph{matrix mutation} on skew-symmetric (or skew-symmetrizable) integer matrices. By regarding skew-symmetric matrices as directed graphs, we obtain a combinatorial interpretation of matrix mutation in terms of \emph{quivers} which are the central objects of our study. In this section we briefly describe preliminaries concerning quivers and quiver mutations.

\begin{definition}
A \emph{quiver} $Q$ is a directed (multi)graph with no loops or 2-cycles. There may be multiple parallel edges between vertices, represented as positive integer weights. The \emph{underlying graph} of $Q$ is the undirected graph where we ignore edge orientations.
\end{definition}

\begin{definition}
The \emph{mutation} of a quiver $Q$ at a vertex $j$ is the quiver $\mu_j(Q)$ obtained by performing the following: (i) For each path $i \to j \to k$ in $Q$, add an arrow $i \to k$; (ii) Reverse all arrows incident to $j$; (iii) Remove any resulting 2-cycles created from the previous two steps.
\end{definition}
Quiver mutation is an involution. That is, for a vertex $j$ in a quiver $Q$, $\mu_j(\mu_j(Q)) = Q$. Consequently, mutation is an equivalence relation on quivers.
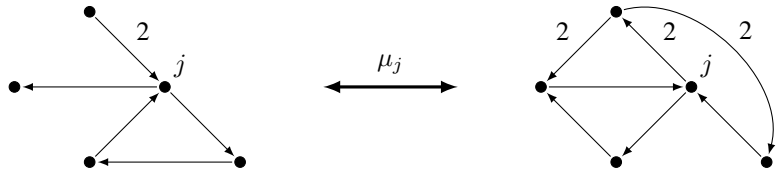
\begin{figure}[ht]
    \centering
    \scalebox{.75}{
    \begin{tikzpicture}[shorten >=3pt, shorten <=3pt]
    \coordinate (a) at (0,0);
    \coordinate (b) at (-2,0);
    \coordinate (c) at (-1,1);
    \coordinate (d) at (-1,-1);
    \coordinate (e) at (1,-1);

    \node[anchor=south west] (j) at (a) {$j$};
    \foreach \x in {a,b,c,d,e}
        \filldraw (\x) circle (2pt);

    \draw (c) edge["2", ->] (a);
    \draw[->] (a) -- (b);
    \draw[->] (a) -- (e);
    \draw[->] (e) -- (d);
    \draw[->] (d) -- (a);

    \draw (.5,0) edge[<->, very thick] node[above=1pt] {$\mu_j$} (2.5,0) ;

    \def\x{5}
    \coordinate (a') at (0+\x,0);
    \coordinate (b') at (-2+\x,0);
    \coordinate (c') at (-1+\x,1);
    \coordinate (d') at (-1+\x,-1);
    \coordinate (e') at (1+\x,-1);

    \node[anchor=south west] (j') at (a') {$j$};
    \foreach \x in {a',b',c',d',e'}
        \filldraw (\x) circle (2pt);

    \draw (c') edge[<-] node[above right] {2} (a');
    \draw[<-] (a') -- (b');
    \draw[<-] (a') -- (e');
    \draw[<-] (d') -- (a');
    \draw (c') edge[->] node[above left] {2} (b');
    \draw[->] (d') -- (b');
    \draw (c') edge["2", ->, bend left=60] (e');
    
\end{tikzpicture}
    }
    \caption{An example of a quiver and mutation at a vertex $j$.}
    \label{fig:mutation-example}
\end{figure}

\begin{definition}
We say two quivers $Q, Q'$ are \emph{mutation equivalent} if $Q'$ can be obtained from $Q$ (up to isomorphism) by a sequence of mutations. We refer to the \emph{mutation class} of $Q$ as the set $[Q]$ of (isomorphism classes of) quivers which are mutation-equivalent to $Q$.
\end{definition}
\begin{definition}
We say a quiver $Q$ is \emph{mutation-finite} if its mutation class $[Q]$ is finite, and \emph{mutation-infinite} otherwise.
\end{definition}
\begin{definition}
    Given a starting quiver $Q$, the \emph{mutation depth} of a quiver $Q' \in [Q]$ (with respect to $Q$) is the minimum number of mutations required to obtain $Q'$ from $Q$.
\end{definition}

We will consider the mutation classes of quivers that are  of \emph{simply laced} (that is, with no parallel edges) Dynkin or extended (affine) Dynkin type. These are the quivers whose underlying undirected graphs are shown in \cref{fig:dynkin-diagrams}. In particular, the quivers of type $D$ and type $\tilde{D}$ will be the focus of our explainability analysis. We also include quivers of type $E$ (\cref{fig:type-e}) which are not finite or affine type in our training and test sets. (The diagram $E_n$ is only finite type for $n = 6, 7, 8$, and affine for $n = 9$.)

The following well-known lemma \cite{vatne2008mutationclassdnquivers} ensures the mutation classes of the simply laced Dynkin diagrams are well-defined.
\begin{lemma}
If quivers $Q_1$ and $Q_2$ have the same underlying graph $T$ and $T$ is a tree, then $Q_1$ and $Q_2$ are mutation equivalent.
\label{lem:trees}
\end{lemma}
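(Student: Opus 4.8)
The plan is to reduce the statement to a purely combinatorial fact about orientations of trees, and then prove that fact by induction. The key observation that makes everything work is the following: if $Q$ is a quiver whose underlying graph is a tree and $j$ is a \emph{source} or \emph{sink} of $Q$, then $\mu_j(Q)$ has the same underlying graph as $Q$, and the only change is that every arrow incident to $j$ is reversed. Indeed, step (i) of the mutation rule adds an arrow $i \to k$ only when there is a path $i \to j \to k$, and no such path exists when $j$ is a source (no incoming arrows) or a sink (no outgoing arrows); step (iii) then has nothing to remove. Call such a move a \emph{source/sink flip at $j$}; each one is realized by a single mutation. So it suffices to show that $Q_1$ can be transformed into $Q_2$ by a sequence of source/sink flips.

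Next I would state the target lemma: for any tree $T$, any two orientations of $T$ are related by a finite sequence of source/sink flips. Since a flip is an involution (hence reversible), it is enough to fix one distinguished orientation and show that every orientation can be flipped to it. A convenient choice is: pick a vertex $v$ and let $O_v$ be the orientation in which every edge points away from $v$, i.e.\ on each edge the endpoint closer to $v$ is the tail. I would prove by induction on $|V(T)|$ that any orientation $Q$ of $T$ can be transformed into $O_v$ by source/sink flips; the base case $|V(T)| = 1$ is vacuous.

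For the inductive step, choose a leaf $\ell \neq v$ with unique neighbor $u$, and set $T' = T \setminus \{\ell\}$. By the inductive hypothesis there is a sequence of source/sink flips on $T'$ carrying $Q|_{T'}$ to the "away-from-$v$" orientation of $T'$. The main obstacle is that this sequence need not lift verbatim to $T$: a vertex that is a source or sink of $T'$ need not be one in $T$, and the only vertex where this can fail is $u$, because of the pendant edge $u\ell$. To fix this, immediately before each flip at $u$ in the lifted sequence I would insert, if necessary, a flip at $\ell$; since $\ell$ is always a source or sink of $T$ this is legal, and it reverses the single edge $u\ell$ so that $u$ becomes a genuine source or sink of $T$ with the correct orientation. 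Carrying out the (possibly augmented) sequence on $T$ leaves $Q|_{T'}$ in the away-from-$v$ orientation, and a final flip at $\ell$ if needed orients $u\ell$ away from $u$ (hence away from $v$), giving $O_v$.

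Finally, applying this construction to both $Q_1$ and $Q_2$, and concatenating the first sequence with the reverse of the second, yields an explicit sequence of mutations carrying $Q_1$ to $Q_2$; in particular $Q_1$ and $Q_2$ are mutation equivalent. I expect the lifting argument in the inductive step to be the only nontrivial point; everything else is bookkeeping.
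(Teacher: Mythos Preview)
Your proof is correct. The key observation---that mutating at a source or sink of any quiver simply reverses the incident arrows---is exactly the right starting point, and your inductive lifting argument handles the only delicate point (a flip at $u$ in $T'$ need not lift to $T$) cleanly: inserting a flip at the pendant leaf $\ell$ whenever necessary is always legal and does not disturb the restriction to $T'$.

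As for comparison with the paper: the paper does not prove this lemma at all. It is quoted as a ``well-known'' fact with a citation to \cite{vatne2008mutationclassdnquivers}, so there is no proof in the paper to compare yours against. Your argument is one of the standard ways to establish the result; another common route fixes an admissible ordering of the vertices and shows that repeated sink (or source) mutations along that ordering realize any orientation, but your leaf-removal induction is just as clean and arguably more self-contained.
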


\begin{figure}
\centering
\scalebox{0.75}{
    $E_n$\hspace{5pt}
    \begin{tikzpicture}[baseline={(0,0)}]
        \node (dots) at (5cm+9pt,0) {$\dots$};
            \graph[nodes={fill, circle, inner sep = 1.5pt}, empty nodes, grow right sep=1cm] {1[label={1}] -- 2[label={2}] -- 3[label={3}] -- {4[label={4}] -- 5[label={5}] -- (dots) -- 6[label={$n-1$}], n[xshift=-1cm-4.5pt, label={below:$n$}]}};
    \end{tikzpicture}}
    \caption{Coxeter-Dynkin diagram for $E_n$, $n \geq 6$. Quivers of type $E_n$ are only mutation-finite for $n = 6, 7, 8, 9$.}
    \label{fig:type-e}
\end{figure}

\textbf{The machine learning task:} Train a classifier $\Phi$ to predict the mutation class of a quiver of type $A$, $D$, $E$, $\tilde{A}$, $\tilde{D}$, or $\tilde{E}$ (\cref{fig:dynkin-diagrams}). We train $\Phi$ on a set of quivers with 6, 7, 8, 9, or 10 nodes, and test on quivers with 11 nodes. More details are provided in~\cref{subsec:model-training}.

As we will show through our explainability studies, a model trained on this task can learn rich features that point towards concise characterizations of these quivers.

\begin{figure*}
    \hspace{2.5cm}\scalebox{.75}{
    $A_n$\hspace{5pt}
    \begin{tikzpicture}[baseline={(0,0)}]
        \node (dots) at (3,0) {$\dots$};
        \graph[nodes={fill, circle, inner sep = 1.5pt}, empty nodes, grow right sep=1cm] {1 -- 2 -- 3 -- (dots) -- "n-2" -- "n-1" -- n};
    \end{tikzpicture}
    \hspace{1cm}\hspace{5pt}
    $\tilde{A}_{n-1}$\hspace{5pt}
    \begin{tikzpicture}[baseline={(0,0)}]
        \node (dots) at (3,0) {$\dots$};
        \graph[nodes={fill, circle, inner sep = 1.5pt}, empty nodes, grow right sep=1cm] {1 -- 2 -- 3 -- (dots) -- "n-2" -- "n-1" -- n};
        \filldraw (3,-1) circle (1.5pt);
        \draw (0cm+1.5pt,0) edge[bend right=17] (3,-1);
        \draw (3,-1) edge[bend right=17] (6cm-1.5pt,0);
    \end{tikzpicture}
}

\hspace{2.5cm}\scalebox{.75}{
    $D_n$\hspace{5pt}
    \begin{tikzpicture}[baseline={(0,0)}]
        \node (dots) at (3,0) {$\dots$};
        \graph[nodes={fill, circle, inner sep = 1.5pt}, empty nodes, grow right sep=1cm] {1 -- 2 -- 3 -- (dots) -- "n-3" -- "n-2" -- {"n-1"[yshift=1cm], n}};
    \end{tikzpicture}
    \hspace{1cm}\hspace{5pt}
    $\tilde{D}_{n-1}$\hspace{5pt}
    \begin{tikzpicture}[baseline={(0,0)}]
        \node (dots) at (3,0) {$\dots$};
        \graph[nodes={fill, circle, inner sep = 1.5pt}, empty nodes, grow right sep=1cm] {{0[yshift=1cm], 1} -- 2 -- 3 -- (dots) -- "n-3" -- "n-2" -- {"n-1"[yshift=1cm], n}};
    \end{tikzpicture}
}

\hspace{2.5cm}\scalebox{.75}{
    $E_6$\hspace{5pt}
    \begin{tikzpicture}[baseline={(0,0)}]
        \graph[nodes={fill, circle, inner sep = 1.5pt}, empty nodes, grow right sep=1cm] {1 -- 2 -- 3 -- {4 -- 5, n[xshift=-1cm-4.5pt]}};
    \end{tikzpicture}
    \hspace{2cm}\hspace{13.5pt}
    $\tilde{E}_6$\hspace{5pt}
    \begin{tikzpicture}[baseline={(0,0)}]
        \graph[nodes={fill, circle, inner sep = 1.5pt}, empty nodes, grow right sep=1cm] {1 -- 2 -- 3 -- {4 -- 5, n[xshift=-1cm-4.5pt] -- "n+1"[xshift=-2cm-9pt, yshift = -1cm]}};
    \end{tikzpicture}
}

\hspace{2.5cm}\scalebox{.75}{
    $E_7$\hspace{5pt}
    \begin{tikzpicture}[baseline={(0,0)}]
        \graph[nodes={fill, circle, inner sep = 1.5pt}, empty nodes, grow right sep=1cm] {1 -- 2 -- 3 -- {4 -- 5 -- 6, n[xshift=-1cm-4.5pt]}};
    \end{tikzpicture}
    \hspace{1cm}\hspace{9pt}
    $\tilde{E}_7$\hspace{5pt}
    \begin{tikzpicture}[baseline={(0,0)}]
        \graph[nodes={fill, circle, inner sep = 1.5pt}, empty nodes, grow right sep=1cm] {0 -- 1 -- 2 -- 3 -- {4 -- 5 -- 6, n[xshift=-1cm-4.5pt]}};
    \end{tikzpicture}
}

\hspace{2.5cm}\scalebox{.75}{
    $E_8$\hspace{5pt}
    \begin{tikzpicture}[baseline={(0,0)}]
        \graph[nodes={fill, circle, inner sep = 1.5pt}, empty nodes, grow right sep=1cm] {1 -- 2 -- 3 -- {4 -- 5 -- 6 -- 7, n[xshift=-1cm-4.5pt]}};
    \end{tikzpicture}
    \hspace{5pt}
    $\tilde{E}_8$\hspace{5pt}
    \begin{tikzpicture}[baseline={(0,0)}]
        \graph[nodes={fill, circle, inner sep = 1.5pt}, empty nodes, grow right sep=1cm] {1 -- 2 -- 3 -- {4 -- 5 -- 6 -- 7 -- 8, n[xshift=-1cm-4.5pt]}};
    \end{tikzpicture}
}
    \caption{Simply laced Dynkin diagrams and their extensions.}
    \label{fig:dynkin-diagrams}
\end{figure*}
\subsection{Graph Neural Networks}
\label{sec:gnn}

Because quivers are represented as directed graphs, it is natural to use graph neural networks to classify them. Graph neural networks (GNNs), introduced in \cite{defferrard2016convolutional, kipf2016semi}, are a class of neural networks which operate on graph-structured data via a \emph{message-passing} scheme. Given an (attributed) graph $G = (V, E)$ with node features $x_v \in \R^p$ for each node $v \in V$ and $e_{uv} \in \R^q$ for each edge $(u,v) \in E$, each layer of the network updates the node feature by aggregating the features of its neighbors. The final graph representation is computed by pooling the node representations. Because prior work has characterized quiver mutation classes based on the presence of particular subgraphs, we use the most expressive GNN architecture for recognizing subgraphs \cite{xu2018powerful}. To this end, we adopt a version of the graph isomorphism network (GIN) introduced in \cite{xu2018powerful} and modified in \cite{hu2019strategies} to support edge features. Since quivers are directed graphs, we adopt a directed message-passing scheme with separate message-passing functions along each orientation of an edge. We refer to our architecture as a \textbf{Dir}ected \textbf{G}raph \textbf{I}somorphism \textbf{N}etwork with \textbf{E}dge features (DirGINE), and denote the network itself by $\Phi$.
We describe our DirGINE architecture in greater detail in \cref{appdx:dir-gine}.


\section{Methods}

\subsection{Model Training}
\label{subsec:model-training}

We train a 4-layer DirGINE GNN with a hidden layer width of 32 to classify quivers into types $A$, $D$, $E$, $\tilde{A}$, $\tilde{D}$, and $\tilde{E}$.
The training data consists of quivers of each type on 6, 7, 8, 9, and 10 nodes. The test set consists of quivers of types $A, D, E, \tilde{A}, \tilde{D}$ on 11 nodes. (Type $\tilde{E}$ is not defined on 11 nodes.) We generate data with Sage \cite{sagemath, musiker2011compendiumclusteralgebraquiver}, which we describe in greater detail in \cref{appdx:data-generation}. \cref{fig:loss-plots} shows the average cross-entropy loss and classification accuracy by epoch across 10 trials. We take the best epoch from training (99.2\% test accuracy) for our analysis.

While the differences between the train and test set (particularly the absence of type $\tilde{E}$ from the test set) might be problematic if our goal was to assess whether a machine learning model can classify quivers into mutation types, our primary goal is to extract mathematical insights from the features the model learns for types $D$ and $\tilde{D}$. As such, we use the test set to indicate whether a model was sufficiently performant to justify the application of explainability tools. Moreover, the inclusion of types $E$ and $\tilde{E}$ help modulate the difficulty of the classification problem, ensuring that the model learns discriminative features for classes $D$ and $\tilde{D}$.

\begin{figure}
    \centering
    \includegraphics[width=.45\textwidth]{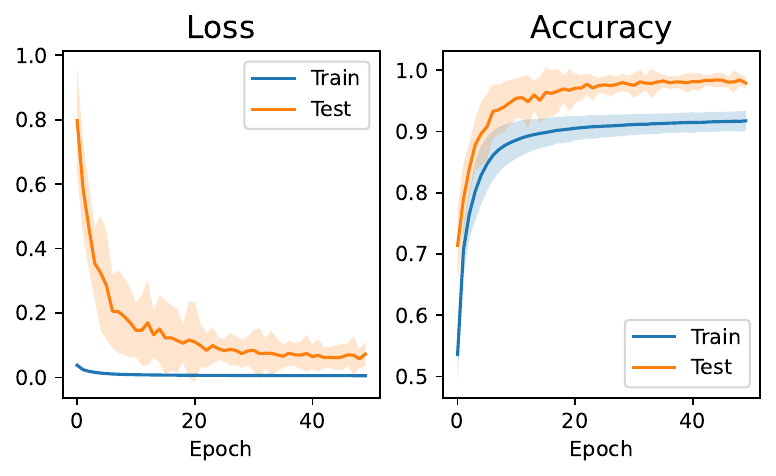}
    \caption{Average cross-entropy loss (left) and classification accuracy (right) on train and test sets across 10 trials of training. Testing accuracy is consistently higher than training accuracy, perhaps due to the absence of class $\tilde{E}$ in the test set and the fact that $\tilde{E}_8 = E_9$ in the train set.}
    \label{fig:loss-plots}
\end{figure}

\subsection{Size Generalization}
\label{subsubsec:size-generalization}

By training on quivers of sizes 7-10 and testing on quivers of size 11, we encourage our DirGINE to learn size-generalizable features, leveraging the promise of size generalization in GNNs. Given this, it is natural to ask how well our model performs for larger $n$. (After all, any classification rules for quivers should be size-generalizable as well.) To examine this question, we test our model on quivers of types $A$, $D$, $E$ $\tilde{A}$, and $\tilde{D}$ on $n = 12, 13, \dots, 20$ vertices. Because the number of distinct quivers grows quickly with size, we only use a subsample of each class, which we discuss in greater detail in \cref{appdx:data-generation}. The results (\cref{tab:size-generalization}) indicate that our GNN generalizes well, albeit not perfectly. We believe this is because our GNN has a fixed depth of 4, and hence cannot recognize the larger substructures that may appear in larger quivers. In particular, we will see in \cref{subsec:type-d} that some quivers can be distinguished by long cycles, which message-passing networks may fail to recognize \cite{chen2020can}.

\begin{table*}
    \centering
    \begin{tabular}{|c|ccccccccc|}
    \hline
        $n$ & 12 & 13 & 14 & 15 & 16 & 17 & 18 & 19 & 20 \\
    \hline
        Accuracy & 99.6 & 98.7 & 97.7 & 95.5 & 94.3 & 92.0 & 91.1 & 89.4 & 89.1 \\
    \hline
    \end{tabular}
    \caption{Accuracy of our trained DirGINE on unseen quivers of size $n = 12, 13, \dots, 20$.}
    \label{tab:size-generalization}
\end{table*}

\subsection{Explaining GNNs}
\label{subsec:explaining-gnns}

In order to extract mathematical insight from a trained GNN model $\Phi$, we require a way to \emph{explain} its predictions by identifying the substructures that are responsible for its predictions. That is, for each graph $G$, we wish to identify a small subgraph $G_S$ such that $\Phi(G) \approx \Phi(G_S)$.
We use the GNN explanation method PGExplainer \cite{luo2020parameterized}, which trains a neural network $g$ to identify important subgraphs. For an input graph $G$, the explanation network produces an attribution $\omega_{u,v}$ for each edge $(u,v)$ using the final representations for nodes $u$ and $v$.

While PGExplainer's effectiveness is mixed across different comparisons \cite{agarwal2023evaluating, amara2024graphframexsystematicevaluationexplainability}, it is effective at providing model-level substructure explanations for graph classification tasks. For example, when applied to a GNN trained on the MUTAG dataset \cite{mutag} to predict the mutagenicity of molecules, PGExplainer is regularly able to identify that the model predicts mutagenicity based on the presence of nitro ($\mathrm{NO}_2$) groups \cite{luo2020parameterized}. As we will see in \cref{sec:mutation-types}, the ability of PGExplainer to identify explanatory graph motifs makes it suitable for our purposes. To analyze our trained DirGINE, we train PGExplainer on 1000 randomly selected instances from the train set for 5 epochs. Further discussion of PGExplainer is provided in \cref{sec:appendix_pgex}. 
\section{Extracting Quiver Characterizations}
\label{sec:mutation-types}

Before we state \cref{thm:main-theorem}, which we re-discover through analysis of our GNN model, we describe the known characterizations of mutation class $A$ quivers \cite{buan2008derived} in \cref{subsec:type-a} and mutation class $D$ quivers \cite{vatne2008mutationclassdnquivers} in \cref{subsec:type-d}. We also describe in \cref{subsec:type-d} how we can reconstruct the characterization of type $D$ \cite{vatne2008mutationclassdnquivers} by probing our trained GNN\footnote{Unlike \cref{thm:main-theorem}, we were already aware of the type $A$ and type $D$ characterizations when we began this work.}.

\subsection{The Mutation Class of \boldmath\texorpdfstring{$A_n$}{A\_n} Quivers}
\label{subsec:type-a}

The class of $A_n$ quivers consists of all quivers which are mutation equivalent to \eqref{eq:a}.
\begin{equation}
\scalebox{0.75}{
    \begin{tikzpicture}[baseline={(0,0)},
    dots/.style={node contents = {\dots}},
    vert/.style={fill, circle, inner sep = 1.5pt}]
        \graph[grow right sep=1cm, empty nodes] {1[vert, label=1] -> 2[vert, label=2] -> "\dots"[dots] -> "n-1"[vert, label=$n-1$] -> n[vert, label=$n$]};
    \end{tikzpicture}}
    \label{eq:a}
\end{equation}

The following theorem provides a combinatorial characterization of all such quivers. We refer to the collection of all such quivers as $\Emm^A_n$. We will also denote $\Emm^A = \bigcup_{n \geq 1} \Emm^A_n$.

\begin{theorem}[\citealt{buan2008derived}] \label{thm:type-A}
    A quiver $Q$ is in the mutation class $\Emm^A_n$ if and only if:
    (i) All cycles are oriented 3-cycles. (ii) Every vertex has degree at most four. (iii) If a vertex has degree four, two of its edges belong to the same 3-cycle, and the other two belong to a different 3-cycle. (iv) If a vertex has degree three, two of its edges belong to a 3-cycle, and the third edge does not belong to any 3-cycle.
\end{theorem}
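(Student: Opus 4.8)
The plan is to prove the two implications of \cref{thm:type-A} separately. One direction---necessity---asks that every quiver mutation-equivalent to \eqref{eq:a} satisfy (i)--(iv); since the path \eqref{eq:a} is a tree with all degrees at most two, it vacuously satisfies (i)--(iv), so it is enough to show that the family of quivers satisfying (i)--(iv) is closed under a single mutation $\mu_v$. The other direction---sufficiency---asks that any quiver satisfying (i)--(iv) lie in $\Emm^A_n$; here I would argue by induction, repeatedly mutating to destroy oriented $3$-cycles one at a time until only a tree remains, which is forced to be a path and hence lies in $\Emm^A_n$ by \cref{lem:trees}.

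For necessity, the essential point is that conditions (i)--(iv) are \emph{local}: (ii)--(iv) constrain only the closed neighborhood of a single vertex, and, given (ii)--(iv), a violation of (i) would show up as a short non-triangular cycle through a bounded-degree region. A mutation $\mu_v$ only reverses the edges at $v$ and adds edges $i \to k$ coming from paths $i \to v \to k$; because (ii) forces $\deg v \le 4$, the number of new edges, hence the change in any vertex's degree, is bounded. One then runs a finite case analysis over the admissible configurations of the degree-$\le 4$ neighborhood of $v$---with (iii)--(iv) sharply limiting which local pictures can occur---and checks that each of (i)--(iv) survives in $\mu_v(Q)$. This is routine but lengthy.

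For sufficiency, I would first read off the global shape that (i)--(iv) impose. No two oriented $3$-cycles share an edge: if $3$-cycles $T \ne T'$ shared the edge $uv$, then at $u$ the edge to the third vertex of $T$ would be forced by (iii)--(iv) to be the unique edge of $u$ lying in no $3$-cycle, contradicting that it lies in $T'$; a similar count rules out $\deg u = 4$. Moreover, the incidence graph on the set of oriented $3$-cycles---with an edge between two $3$-cycles whenever they share a vertex---is a forest, since by (iii) each shared vertex lies in exactly two $3$-cycles, and a cycle in this incidence graph would splice together into a cycle of $Q$ that is not an oriented $3$-cycle, violating (i). Hence $Q$ decomposes as a tree of oriented $3$-cycles and simple edges. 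If $Q$ has no $3$-cycle it is a tree, and by (iv) it has no vertex of degree $\ge 3$, so it is a path whose underlying graph is that of \eqref{eq:a}; \cref{lem:trees} then puts $Q \in \Emm^A_n$. Otherwise pick a leaf (or isolated node) $T$ of the incidence forest; at most one vertex of $T$ lies on another $3$-cycle, so $T$ has at least two vertices $x$ that lie in no other $3$-cycle, each of degree $2$, or of degree $3$ with a single pendant subtree attached (by (iii)--(iv)). A short sequence of mutations at such vertices (and, when they carry a pendant subtree, at the root of that subtree) pushes the pendants ``outward'' until two vertices of $T$ have degree exactly $2$ with both incident edges inside $T$; mutating at one of them cancels an edge of $T$, turning $T$ into a path of length two, strictly lowering the number of oriented $3$-cycles while, by the necessity analysis, preserving (i)--(iv). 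The inductive hypothesis then applies.

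The main obstacle is bookkeeping, concentrated in two places: the finite case check that (i)--(iv) are mutation-invariant, and the verification that the ``leaf $3$-cycle simplification'' can always be completed and strictly decreases the $3$-cycle count without breaking (i)--(iv) (a potential-function argument on the sizes of the pendant subtrees hanging off $T$ makes termination precise). A more conceptual alternative---plausibly how these conditions were found in the first place---is to identify the quivers in $\Emm^A_n$ with triangulations of a disk carrying $n+3$ marked boundary points, under which quiver mutation becomes the flip of a diagonal; conditions (i)--(iv) then express exactly the local combinatorics of how triangles of a triangulation meet, and the theorem follows from connectivity of the flip graph. But establishing that dictionary rigorously is itself the bulk of the argument, so either way the work lies in the combinatorial details rather than in a single hard idea.
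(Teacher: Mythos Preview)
The paper does not contain a proof of \cref{thm:type-A}; it is quoted verbatim from \cite{buan2008derived} as background for the later characterizations of types $D$ and $\tilde D$, and no argument for it appears anywhere in the text or appendices. So there is nothing to compare your proposal against.

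That said, your strategy is sound and is essentially the standard one. Two remarks. First, your claim that no two oriented $3$-cycles share an edge follows more directly from (i) alone: if $T$ and $T'$ share the edge $u\to v$, their two remaining vertices together with $u,v$ form a $4$-cycle in the underlying graph, contradicting (i). Second, the phrase ``pendant subtree'' in your sufficiency argument is slightly imprecise: the piece hanging off a non-shared vertex of a leaf $3$-cycle need not itself be a tree, only another quiver satisfying (i)--(iv). The clean way to organize the induction is to contract every oriented $3$-cycle to a point; because no two $3$-cycles share an edge and (i) forbids longer cycles, the contracted graph is a tree, and a leaf $3$-cycle is then one mapping to a leaf of that tree after stripping off any genuine degree-$1$ pendants first. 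With that formulation the ``destroy a leaf $3$-cycle'' step goes through exactly as you describe. Your closing remark about the polygon-triangulation model is apt: in \cite{buan2008derived} the result is in fact obtained via the bijection between $\Emm^A_n$ and triangulations of an $(n{+}3)$-gon, under which mutation is a diagonal flip, so conditions (i)--(iv) are read off from how triangles of a triangulation can meet.
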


While this result is interesting in its own right, the importance for this paper is that the quivers in the mutation class of type $D_n$ contain subquivers of type $A$.

\subsection{The Mutation Class of \boldmath\texorpdfstring{$D_n$}{D\_n} Quivers}
\label{subsec:type-d}

We now describe the classification of the mutation class $\mathcal{M}_n^D$ of Type $D_n$ quivers by \citet{vatne2008mutationclassdnquivers}. 
As with type $A$, we use $\mathcal{M}^D$ to denote the collection of type $D$ quivers for all $n \geq 3$ (noting that $A_3 = D_3$ and that $D_1, D_2$ are not defined):
\begin{equation}
\scalebox{0.75}{
    \begin{tikzpicture}[baseline={(0,0)},
    dots/.style={node contents = {\dots}},
    vert/.style={fill, circle, inner sep = 1.5pt}]
        \graph[grow right sep=1cm, empty nodes] {1[vert, label=1] -> 2[vert, label=2] -> "\dots"[dots] -> "n-2"[vert, label={[shift={(-.3,0)}]$n-2$}] -> {"n-1"[yshift=1cm, vert, label=right:$n-1$], n[vert, label=right:$n$]}};
    \end{tikzpicture}}
    \label{eq:d}
\end{equation}

In Vatne’s classification, each type is a collection of subquivers joined by gluing vertices.
The proof relies on the fact that if the vertex joining the blocks is a {\em connecting vertex} (defined below), one can mutate a sub-quiver of type $A$ to the quiver in \eqref{eq:a} without ever mutating at the connecting vertex. Formally, we have the following.

\begin{definition}
For a quiver $\Gamma \in \Emm^A_n$, we say a vertex $c$ of $\Gamma$ is a \emph{connecting vertex} if $c$ is either degree one or degree two and part of an oriented 3-cycle.
\end{definition}

Vatne proves the following classification of quivers of $\mathcal{M}_n^{D}$ into four types by first proving that each subtype is mutation-equivalent to \eqref{eq:d}, then proving that the collection of quivers described by these subtypes is closed under quiver mutation.

\begin{theorem}[\citealt{vatne2008mutationclassdnquivers}]\label{thm:type-d-classification}
    The quivers of the mutation class of $D_n$ consist of four subtypes shown in the diagrams below. In these diagrams, $\Gamma$, $\Gamma'$, and $\Gamma''$ are full subquivers of mutation class $A$ with a connecting vertex. Unoriented edges mean that the orientation does not matter.
    
    \textbf{Type I.}
    \begin{equation}
        \scalebox{0.75}{\begin{tikzpicture}[baseline={(0,0)}, shorten >=3pt, shorten <=3pt]
    \draw[dashed] (-1,-1) -- (1,-1) -- (2,0) -- (1,1) -- (-1,1) -- cycle;
    \node at (0,0) {$\Gamma$};
    \draw (3,1) -- (2,0) -- (3,-1);
    \filldraw (2,0) circle (2pt) node[above=5pt]{$c$};
    \filldraw (3,1) circle (2pt);
    \filldraw (3,-1) circle (2pt);
\end{tikzpicture}}
    \end{equation}
    
    \textbf{Type II.}
    \begin{equation}
        \scalebox{0.75}{\begin{tikzpicture}[baseline={(0,0)}, shorten >=3pt, shorten <=3pt]
    \draw[dashed] (-1,-1) -- (1,-1) -- (2,0) -- (1,1) -- (-1,1) -- cycle;
    \node at (0,0) {$\Gamma$};
    \draw[->] (2,0) -- (3,1);
    \draw[->] (2,0) -- (3,-1);
    \draw[->] (3,1) -- (4,0);
    \draw[->] (3,-1) -- (4,0);
    \draw[->] (4,0) -- (2,0);
    \filldraw (2,0) circle (2pt) node[above=5pt]{$c$};
    \filldraw (4,0) circle (2pt) node[above=5pt]{$c'$};
    \filldraw (3,1) circle (2pt);
    \filldraw (3,-1) circle (2pt);
    \draw[dashed] (4,0) -- (5,-1) -- (7,-1) -- (7,1) -- (5,1) -- cycle;
    \node at (6,0) {$\Gamma'$};
\end{tikzpicture}}
    \end{equation}
    
    \textbf{Type III.}
    \begin{equation}
        \scalebox{0.75}{\begin{tikzpicture}[baseline={(0,0)}, shorten >=3pt, shorten <=3pt]
    \draw[dashed] (-1,-1) -- (1,-1) -- (2,0) -- (1,1) -- (-1,1) -- cycle;
    \node at (0,0) {$\Gamma$};
    \draw[->] (2,0) -- (3,1);
    \draw[->] (3,1) -- (4,0);
    \draw[->] (4,0) -- (3,-1);
    \draw[->] (3,-1) -- (2, 0);
    \filldraw (2,0) circle (2pt) node[above=5pt]{$c$};
    \filldraw (4,0) circle (2pt) node[above=5pt]{$c'$};
    \filldraw (3,1) circle (2pt);
    \filldraw (3,-1) circle (2pt);
    \draw[dashed] (4,0) -- (5,-1) -- (7,-1) -- (7,1) -- (5,1) -- cycle;
    \node at (6,0) {$\Gamma'$};
\end{tikzpicture}}
    \end{equation}
    
    \textbf{Type IV.}
    \begin{equation}
        \scalebox{0.75}{\begin{tikzpicture}[baseline={(0,1)}, shorten >=3pt, shorten <=3pt]
    \filldraw (-2,0) circle (2pt);
    \filldraw (-1,1) circle (2pt);
    \filldraw (1,1) circle (2pt);
    \filldraw (2,0) circle (2pt);
    \draw[->] (-2,0) -- (-1,1);
    \draw (-1,1) edge[->] node[above] {$\alpha$} (1,1);
    \draw[->] (1,1) -- (2,0);
    \draw[->, densely dotted] (2,0) to[bend left=60] (-2,0);
    
    \filldraw (-2,1) circle (2pt) node[above left=5pt]{$c'$};
    \draw[->, dotted] (-1,1) -- (-2,1);
    \draw[->, dotted] (-2,1) -- (-2,0);
    \draw[dashed] (-2,1) -- (-3,1) -- (-4,2) -- (-3,3) -- (-2,2) -- cycle;
    \node at (-3,2) {$\Gamma'$};
    
    \filldraw (0,2) circle (2pt) node[above=5pt]{$c$};
    \draw[->, dotted] (1,1) -- (0,2);
    \draw[->, dotted] (0,2) -- (-1,1);
    \draw[dashed] (0,2) -- (1,3) -- (1,4) -- (-1,4) -- (-1,3) -- cycle;
    \node at (0,3) {$\Gamma$};
    
    \filldraw (2,1) circle (2pt) node[above right=5pt]{$c''$};
    \draw[->, dotted] (2,1) -- (1,1);
    \draw[->, dotted] (2,0) -- (2,1);
    \draw[dashed] (2,1) -- (3,1) -- (4,2) -- (3,3) -- (2,2) -- cycle;
    \node at (3,2) {$\Gamma''$};
\end{tikzpicture}}
    \end{equation}
\end{theorem}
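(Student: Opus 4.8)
Write $\mathcal{F}$ for the family of quivers described by Types I--IV, and let $\mathcal{M}_n^D = [D_n]$ be the mutation class of \eqref{eq:d}. The plan is to prove the two inclusions $\mathcal{F} \subseteq \mathcal{M}_n^D$ and $\mathcal{M}_n^D \subseteq \mathcal{F}$ separately, following the strategy flagged above: each subtype is shown to be mutation equivalent to \eqref{eq:d}, and $\mathcal{F}$ is shown to be closed under a single mutation.

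For $\mathcal{F} \subseteq \mathcal{M}_n^D$ I would treat the four subtypes one at a time by exhibiting an explicit mutation sequence carrying each to \eqref{eq:d}. The essential tool is the connecting-vertex reduction: a full type-$A$ subquiver $\Gamma$ with connecting vertex $c$ can be mutated --- via mutations \emph{never performed at $c$} --- to the linear quiver \eqref{eq:a}, and since a connecting vertex of a path must be an endpoint, $c$ ends up as an endpoint of that path. Applying this to $\Gamma$ (and to $\Gamma'$, $\Gamma''$ where present) replaces each subtype by a small ``core'' gadget with one, two, or three hanging paths attached at $c$, $c'$, $c''$; a bounded number of further mutations at the core vertices then collapses the gadget onto the two-pronged fork of \eqref{eq:d}. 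Type~I is essentially immediate, Types~II and III require unwinding the central pair of glued oriented $3$-cycles, and Type~IV --- with its central triangle joined to three $A$-blocks --- is the most delicate but is still a finite, explicit computation (and \cref{lem:trees} lets us normalize any tree-shaped block for free along the way).

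For $\mathcal{M}_n^D \subseteq \mathcal{F}$ I would induct on mutation depth from \eqref{eq:d}. The base case is the observation that \eqref{eq:d} is itself of Type~I, with $\Gamma$ the path $A_{n-2}$ and $c$ its degree-one endpoint. The inductive step reduces to showing $\mathcal{F}$ is closed under one mutation $\mu_v$, which is the heart of the argument and is a case analysis by the position of $v$ in the block decomposition: (a) $v$ is an interior vertex of an $A$-block $\Gamma$ not adjacent to its connecting vertex, where \cref{thm:type-A} keeps the block type-$A$ and the connecting vertex is untouched; (b) $v$ lies in an $A$-block but is adjacent to its connecting vertex, where one checks via the local degree and $3$-cycle conditions of \cref{thm:type-A} that either $c$ remains a connecting vertex of the (possibly re-typed) block or the global configuration passes to another admissible subtype; and (c) $v$ is a core vertex $c$, $c'$, $c''$ or one of the two inner vertices of the Type~II/III/IV gadget, where one simply computes $\mu_v$ and identifies the resulting subtype --- noting, e.g., that mutating at the connecting vertex of Type~I can yield Types~II or~III. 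Combining the two inclusions gives $\mathcal{M}_n^D = \mathcal{F}$.

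I expect the closure step (b)--(c) to be the main obstacle, since it requires enumerating all local configurations around the mutated vertex and re-recognizing each outcome as one of exactly four subtypes, which amounts to controlling how both the connecting-vertex property and the interface between an $A$-block and the core gadget transform under mutation. I would isolate this bookkeeping into a preliminary lemma --- a relative version of \cref{thm:type-A} stating that mutating a full $A$-subquiver at a non-connecting vertex again yields a full $A$-subquiver with the same connecting vertex --- after which the remaining cases become finite and mechanical.
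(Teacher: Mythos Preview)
Your overall strategy---first showing each subtype is mutation equivalent to \eqref{eq:d}, then showing the family is closed under a single mutation---is exactly what the paper reports as Vatne's approach, and the connecting-vertex reduction you single out (mutate a type-$A$ block to a path without touching its connecting vertex) is precisely the lemma Vatne isolates, stated in this paper as \cref{lem:wlog-a}.

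There is, however, a misreading of Type~IV that would derail your execution. You describe it as a ``central triangle joined to three $A$-blocks,'' but the central cycle in Type~IV may have \emph{arbitrary} length $\geq 3$, and \emph{each} edge of that cycle may independently carry a spike with its own type-$A$ block attached---the dotted arc at the bottom of the diagram and the dotted spike edges indicate this, and the paper says so explicitly just after the statement. So the core of a Type~IV quiver is not a fixed finite gadget: its size can scale with $n$, and the reduction to \eqref{eq:d} is not ``a bounded number of further mutations at the core vertices'' but an inductive argument that shrinks the central cycle one vertex at a time. Likewise, in your closure step~(c), mutating at a central-cycle vertex of Type~IV typically changes the cycle length (and when the cycle has length~$3$, can drop you into Type~I, II, or~III), so you must track arbitrarily many spikes rather than exactly three. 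Once you correct the description of Type~IV, the rest of your plan goes through essentially as written.
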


In particular, we draw attention to the oriented \emph{central cycle} in type IV. Every edge in the central cycle may be part of an oriented triangle with a connecting vertex not on the central cycle (called a \emph{spike}).

\begin{figure*}[ht]
    \centering
    \includegraphics[width=\textwidth]{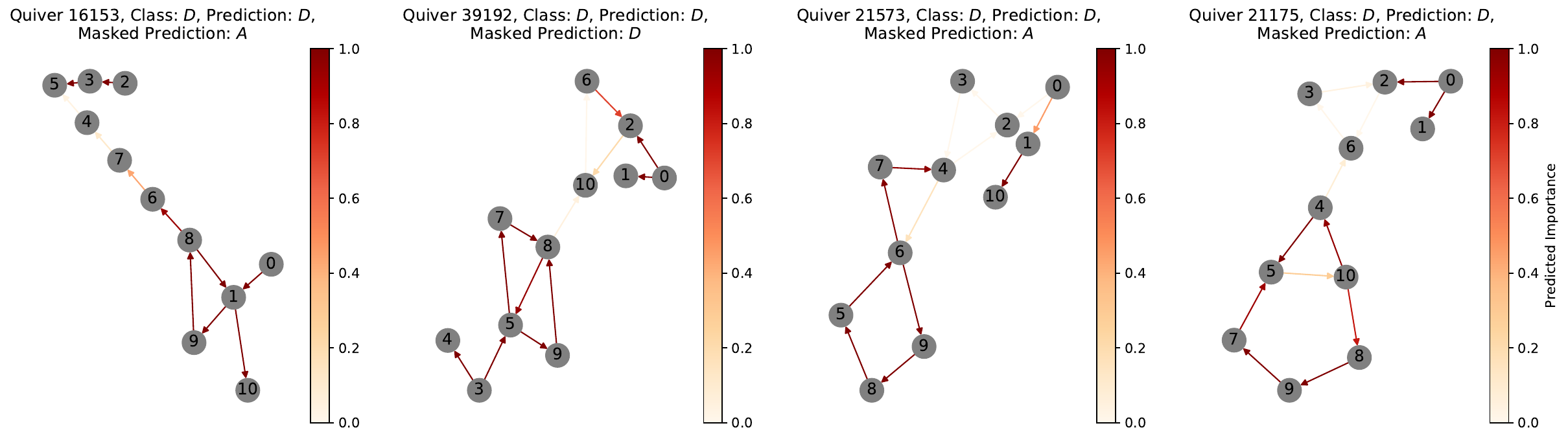}
    \caption{Edge attributions from PGExplainer on type $D_{11}$ quivers of each subtype. The masked prediction is the GNN prediction when highly attributed edges are removed. From left to right: \textbf{Type I.} Five dark red edges highlight the subquiver consisting of the leaves 0 and 10, the connecting vertex 1, and the 3-cycle that the connecting vertex is a part of; \textbf{Type II.} Five dark red edges highlight the center block seen in the Type II diagram(the vertices 5 and 8 are $c$ and $c'$, respectively); \textbf{Type III.} Four dark red edges highlight the oriented 4-cycle, where the vertex 6 is the vertex $c'$; \textbf{Type IV.} The dark red edges highlight the oriented 5-cycle and the spike $5 \xrightarrow{\alpha} 10 \to 4 \to 5$.}
    \label{fig:type-d-explanations}
\end{figure*}

\begin{figure*}[ht]
    \centering
    \includegraphics[width=.9\textwidth]{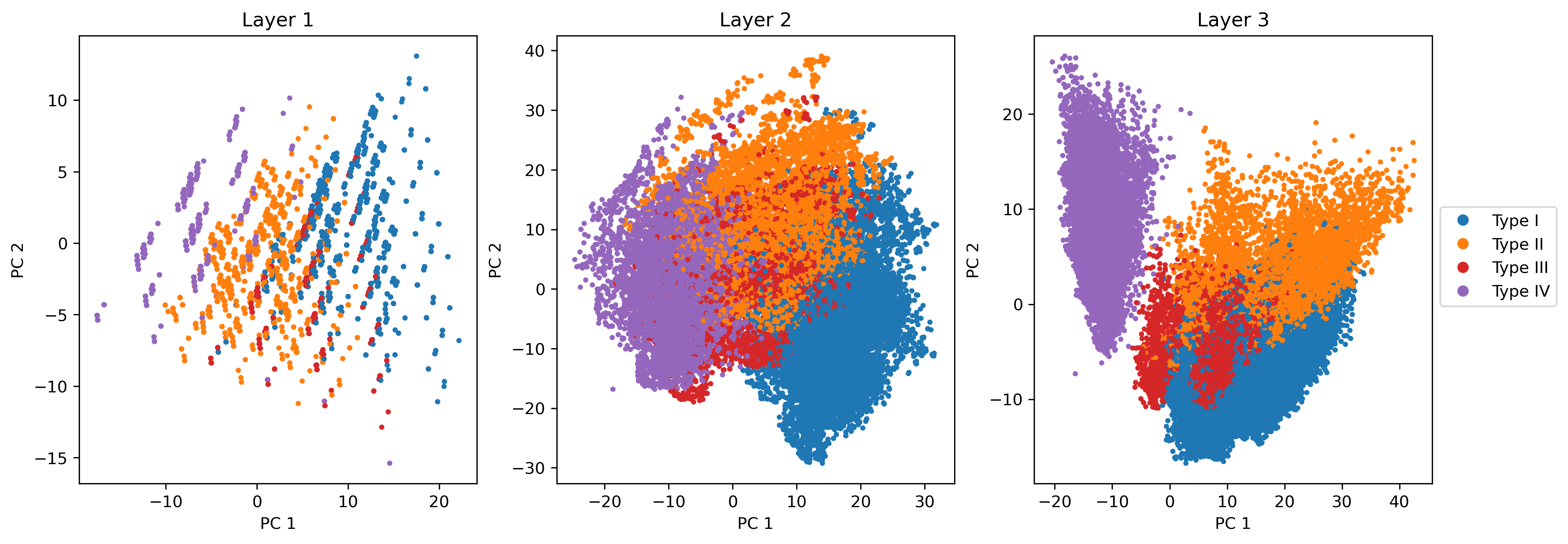}
    \caption{PCA of latent space embeddings for mutation class $D$ quivers colored by subtypes.}
    \label{fig:type-d-embeddings}
\end{figure*}

Because our DirGINE can, in general, learn to recognize certain subgraphs in a quiver, it is reasonable to ask whether the model was relying on the same subtype motifs identified by human mathematicians. We investigated the case of the type $D$ classification from \cref{thm:type-d-classification} using PGExplainer as described in \cref{subsec:explaining-gnns}.
Some sample soft masks produced by PGExplainer are shown for quivers correctly classified as type $D$ in \cref{fig:type-d-explanations}. Dark red edges are judged more important for the type $D$ prediction by PGExplainer, while lighter edges are judged less important. In each case, the explanation highlights edges which align with \cref{thm:type-d-classification}. \cref{fig:type-d-explanations} contains an example of each of the four types. The reader can check that in each case the edges of the relevant subtype motif tends to have substantially higher attribution.
This initial analysis seems to suggest that our model has independently learned the same subtype motifs for classifying type $D$ quivers from known (human) theory.
The attributions shown in \cref{fig:type-d-explanations} also suggest that the model relies on the opposite end of the quiver---a behavior that seems superfluous for recognizing type $D$. We will show in \cref{sec:affine-d} that this is actually very important for distinguishing quivers of type $D$ from $\tilde{D}$. 

\cref{fig:type-d-explanations} strongly suggests that our GNN recognizes the same subtypes as in \cref{thm:type-d-classification}. However, one should be careful in this interpretation, as there is a substantial literature showing that it is easy to misinterpret post-hoc explainability methods \cite{ghorbani2019interpretation,kindermans2019reliability}. Thus, we also examine the embeddings of type $D_n$ quivers in the model's latent space. We use principal component analysis (PCA) to reduce the dimension of the embedding from the model width of 32 to 2 dimensions for visualization. The resulting graph embeddings, plotted in \cref{fig:type-d-embeddings}, show a clear separation of the different subtypes. In fact, the layer 3 embeddings in the original 32-dimensional embedding space can be separated by a linear classifier with $99.7 \pm 0.0\%$ accuracy. Subtypes I through IV are not labeled in the training data, so this analysis, combined with the PGExplainer attributions, provides strong evidence that a GNN is capable of re-discovering the same abstract, general characterization rules that align with known theory through training on a naive classification task.

\subsubsection{Do changes in subtype motifs actually impact predictions?}
\label{subsec:additional-experiment}

To further understand how the model is using the type $D$-specific subquivers from \cref{thm:type-d-classification} in practice, we examine the model's predictions when the edges identified by PGExplainer are removed. If the model is primarily keying into the type $D$ motif, removing this should result in the quiver being predicted as type $A$.

We find that across all $32,066$ test examples from type $D$, a plurality (14,916 or 46.5\%) of the predictions flip to $A$, as we would expect if it was using the characterization from \cref{thm:type-d-classification}. Of the remaining examples, most (14,238 or 44.4\% of the total) flip to a predicted class of $E$, with the next-largest being $D$ (no flip) at 2,581 or 8.0\%. Finally, 264 quivers (0.08\%) flip to $\tilde{A}$, while 67 quivers (0.02\%) flip to $\tilde{D}$. None of the predictions flip to $\tilde{E}$. 

Why are there so many instances that flip to type $E$? This may be due to the PGExplainer attributions for type $D$ being inexact, perhaps because PGExplainer generalizes imperfectly or because some edges do not contribute positively to $D$ but rather contribute negatively to other classes. As a result, many of the quivers where we remove highly attributed edges may be out-of-distribution for the model. Since type $E$ is the only class which contains mutation-infinite quivers, it is perhaps not surprising that the model would predict these out-of-distribution quivers are of type $E$. 
\section{Characterizing \boldmath\texorpdfstring{$\tilde{D}$}{\tilde{D}\_{n-1}} Quivers}
\label{sec:affine-d}

\begin{figure*}[t]
    \centering
    \includegraphics[width=.9\textwidth]{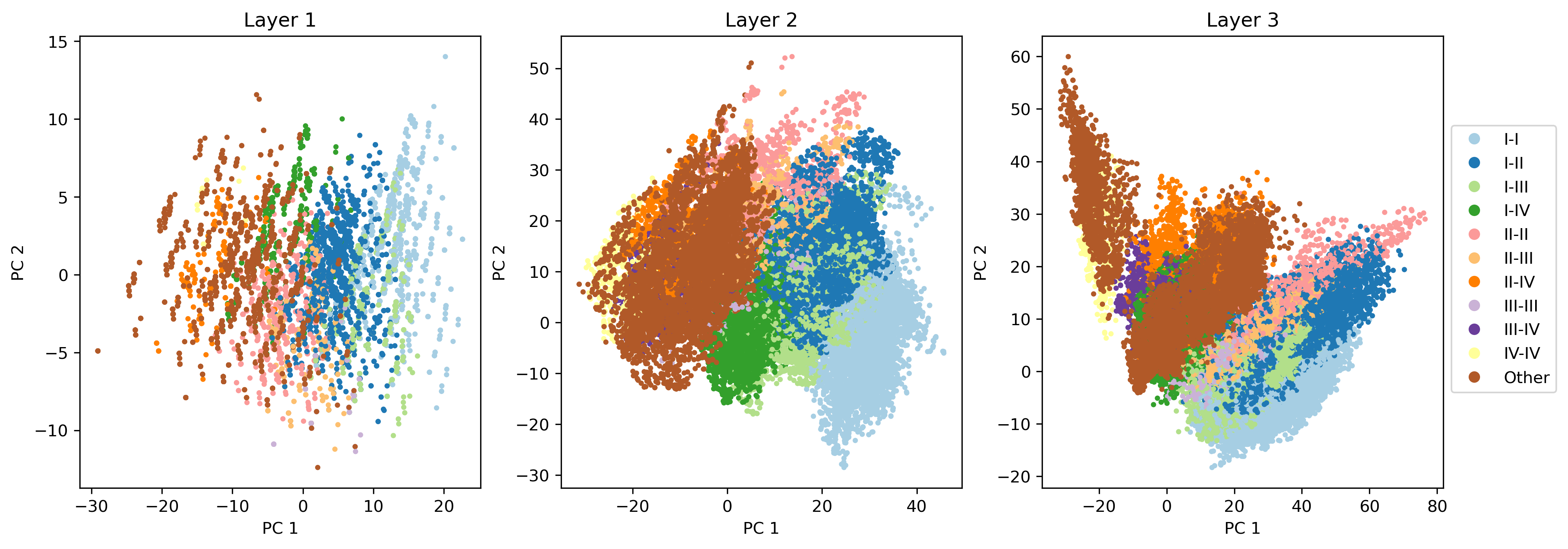}
    \caption{PCA reductions of latent space embeddings for mutation class $\tilde{D}_{10}$ quivers colored by paired types or ``Other''.}
    \label{fig:affine-d-embeddings}
\end{figure*}

In this section, we describe how our trained model and explainability techniques enable us to independently discover a characterization of the mutation class of $\tilde{D}$ quivers, stated in \cref{thm:main-theorem} below. We learned after initial circulation of this work that this result can be derived from \cite{henrich2011}. However, as our characterization closely reflects the way in which we analyzed our model, we felt it would still be of interest to the machine learning community as an example of extracting research-level mathematics from a deep learning model. Due to the complexity of the characterization, some of the details of the characterization are left to \cref{appdx:affine-d}, and the proof to \cref{appdx:main-proof}. 
\begin{theorem}
    The mutation class of class $\tilde{D}_{n-1}$ quivers is $\mathcal{M}_{n-1}^{\tilde{D}}$, the collection of quivers of paired types together with Types V, Va, Vb, V', Va', Vb', VI, and VI'.
    \label{thm:main-theorem}
\end{theorem}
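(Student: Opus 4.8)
The plan is to follow Vatne's two-part strategy for \cref{thm:type-d-classification}. Write $\mathcal{S}$ for the collection of quivers described by the listed types (the paired types together with Types V, Va, Vb, V', Va', Vb', VI, VI'), and let $Q_0$ be a fixed orientation of the $\tilde{D}_{n-1}$ tree, so that $\mathcal{M}_{n-1}^{\tilde{D}} = [Q_0]$. I would prove (a) every $Q \in \mathcal{S}$ is mutation equivalent to $Q_0$, hence $\mathcal{S} \subseteq [Q_0]$; and (b) $\mathcal{S}$ is closed under mutation at an arbitrary vertex. Since $Q_0$ itself lies in $\mathcal{S}$ (it is a degenerate paired type) and a mutation class is connected under single mutations, (b) gives $[Q_0] \subseteq \mathcal{S}$; together with (a) this yields $\mathcal{S} = \mathcal{M}_{n-1}^{\tilde{D}}$, which is the theorem.

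For part (a), the key tool is \cref{lem:trees}: since $\tilde{D}_{n-1}$ is a tree, it suffices to exhibit for each type a sequence of mutations ending at a quiver whose underlying graph is the $\tilde{D}_{n-1}$ tree. For the paired types this is essentially bookkeeping on top of Vatne's analysis — a paired-type quiver is two ``type-$D$ ends'' joined along a chain of type-$A$ blocks, and one applies the connecting-vertex reductions behind \cref{thm:type-A,thm:type-d-classification} to straighten each block without ever mutating at a gluing vertex, peeling the quiver down to the tree. The genuinely new work is with Types V--VI and their variants, which (in the spirit of Vatne's Type IV) carry a long oriented central cycle decorated with spikes; here one first mutates at a well-chosen cycle vertex (or spike apex) to open the cycle into a path, and then reduces as before. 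Small values of $n$, where the central cycle degenerates, are checked directly.

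For part (b) one runs a case analysis over the type of $Q \in \mathcal{S}$ and the structural role of the mutation vertex $v$: interior to a type-$A$ block, a connecting/gluing vertex, a vertex on the central cycle, a spike apex, or one of the distinguished ``affine'' vertices. Mutations supported inside a type-$D$ or type-$A$ sub-block are already covered by \cref{thm:type-A,thm:type-d-classification}; the content is in the cases touching the affine structure, where one computes $\mu_v(Q)$ explicitly — completing $2$-paths through $v$, reversing arrows at $v$, cancelling $2$-cycles — and matches the result against the list. Symmetry trims the casework substantially: interchanging the two ends of $\tilde{D}_{n-1}$, swapping the two leaves at an end, and reversing all arrows each permute the list of types, with the primed types being the images of the unprimed ones, so only orbit representatives need to be treated; an induction on $n$ lets mutations far from the affine part be absorbed into the $D_n$ and smaller-$\tilde{D}$ statements.

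I expect part (b) to be the main obstacle. The list is long — ``paired types'' is itself a two-parameter family indexed by the Vatne subtype at each end — and the cycle cases require tracking orientations through $2$-cycle cancellation, which is exactly where sign and direction errors arise. The crux is showing that mutating a central-cycle vertex of a Type V/VI quiver lands back among Types V--VI rather than producing a genuinely new configuration; this is the analogue of the most delicate step in Vatne's argument, now complicated by two interacting special substructures (the two decorated ends, or the cycle together with a spike) rather than one. Equally important, and equally error-prone, is verifying that the list is \emph{exactly} closed — that no further type is forced — since a missing configuration would first surface here.
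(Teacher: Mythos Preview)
Your proposal is correct and follows essentially the same two-step strategy as the paper: first show each listed type is mutation equivalent to the base $\tilde{D}_{n-1}$ orientation, then show the list is closed under single mutations via a case analysis over types and vertex roles. One minor tactical difference: for Types V--VI' the paper reduces to a \emph{paired type} by a single mutation at a central-cycle vertex (e.g.\ $\mu_a$ on Type V yields Type I-IV or II-IV), rather than opening all the way to the tree; this shortens part (a) by piggybacking on the paired-type reduction already in hand, and you may find it cleaner than straightening the cycle directly.
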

Similar to Vatne's classification of the mutation class of $D_n$ quivers, our classification consists of different subtypes. However, there are many more subtypes compared to the type $D$ case, so 
we find it convenient to organize them into families: what we call \emph{paired types}, quivers with one central cycle, and quivers with two central cycles.

As in Types $A$ and $D$, \cref{lem:trees} means that we may choose an arbitrary orientation of the extended Dynkin diagram $\tilde{D}_{n-1}$. It will be convenient to begin with the orientation in \eqref{eq:affine-d}, viewing it as two quivers $Q_1$ and $Q_2$ of type $D$ \eqref{eq:d} connected at their roots by a connecting vertex $c$.

\begin{equation}
    \scalebox{0.75}{
    \begin{tikzpicture}[baseline={(0,0)},
        dots/.style={node contents = {\dots}},
        vert/.style={fill, circle, inner sep = 1.5pt}]
    \graph[grow right sep=1cm, empty nodes] {{0[vert, label=left:0, yshift=1cm], 1[vert, label=left:1]} <- 2[vert, label=2] <- "\dots1"[dots] <- c[vert, label=$c$] -> "\dots2"[dots] -> "n-2"[vert, label={[shift={(-.3,0)}]$n-2$}] -> {"n-1"[vert, label=right:$n-1$, yshift=1cm], n[vert, label=right:$n$]}};
    \end{tikzpicture}}
    \label{eq:affine-d}
\end{equation}

From the orientation in \eqref{eq:affine-d} it is immediately clear that by mutating $Q_1$ and $Q_2$ independently without mutating $c$, we can obtain any pair of subtypes of type $D$. Because the placement of $c$ is arbitrary, we see that many type $\tilde{D}_{n-1}$ quivers can be described by two of the type $D$ subtypes characterized in \cref{subsec:type-d} which share a type $A$ piece $\Gamma_c$. We will refer to such quivers as Types I-I, I-II, I-III, etc., and collectively as \emph{paired types}.
(See \cref{fig:paired-types} in \cref{appdx:additional-figures} for all paired types.) It remains, then, to identify the quivers in this mutation class which not are of paired type.

While a human mathematician could conceivably discover the same characterization of $\tilde{D}$ quivers simply by beginning with \eqref{eq:affine-d} and exhaustively performing mutations, the mutation class of $\tilde{D}$ quivers admits many diverse subtypes compared to classes $A$ or $D$. This increased complexity creates some difficulty (and perhaps more importantly, tedium) in examining examples manually. By taking advantage of machine learning, we are able to quickly organize examples into distinct families to examine.

Based on our strategy in \cref{subsec:type-d}, we plot PCA reductions of the latent space in \cref{fig:affine-d-embeddings}. We can see that the quivers that do not correspond to paired subtypes, colored as ``Other'', separate clearly into two clusters in layer 3. By isolating these quivers and performing $k$-means clustering with $k = 2$, the model guides our characterization of the remaining class $\tilde{D}$ subtypes. \cref{fig:affine-d-clusters} shows examples from each cluster. (More examples are given in \cref{appdx:additional-figures}.) 
The key insight we gain from examining the quivers in each cluster
is that the remaining subtypes can be separated by the number of  Type IV-like central cycles. 

\begin{figure*}[t]
    \centering
    \includegraphics[width=.8\textwidth]{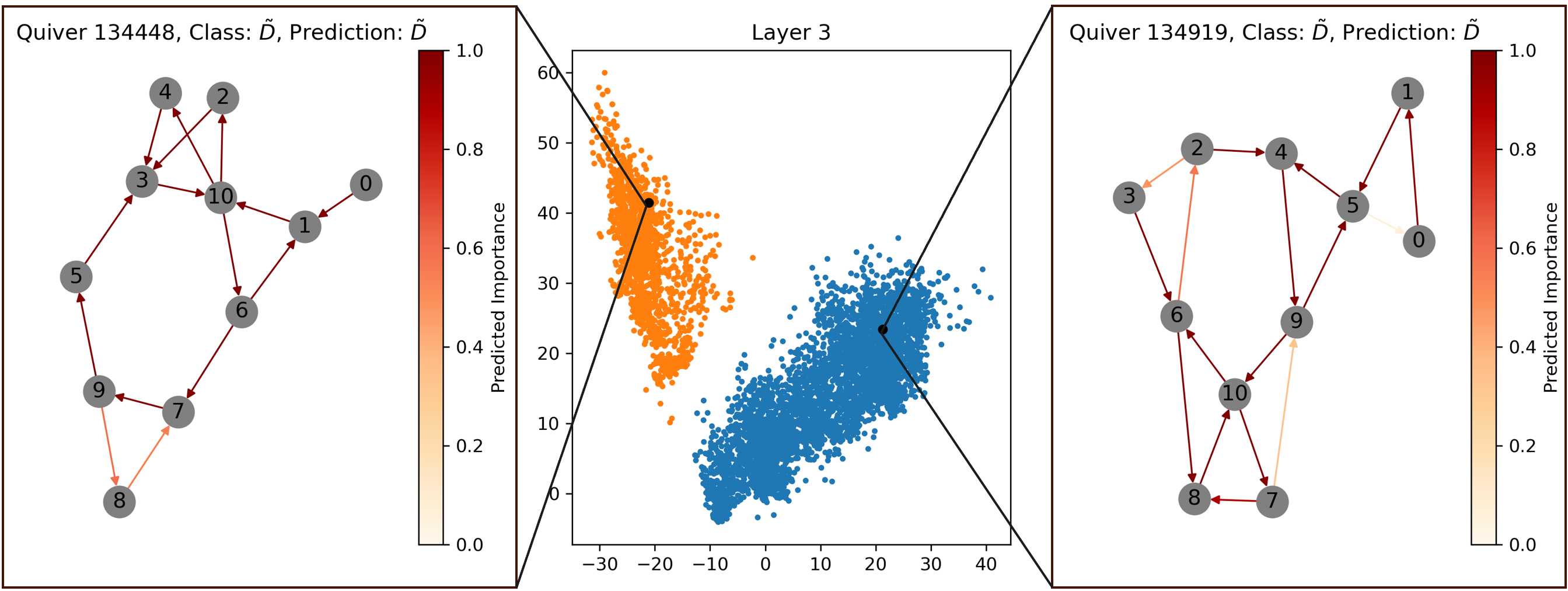}
    \caption{PCA of clustered layer 3 latent space embeddings of ``Other'' quivers in $\mathcal{M}_{10}^{\tilde{D}}$ (middle), with selected examples from each cluster (left, right). Edges are colored by PGExplainer attributions. The quiver on the left is of Type V while the quiver on the right is of Type VI.}
    \label{fig:affine-d-clusters}
\end{figure*}

\begin{equation}
    \scalebox{0.75}{\begin{tikzpicture}[baseline={(0,1)}, shorten >=3pt, shorten <=3pt]
        \filldraw (-2,0) circle (2pt);
        \filldraw (-1,1) circle (2pt) node[above left=5pt]{$a$};
        \filldraw (1,1) circle (2pt) node[above right=5pt]{$b$};
        \filldraw (2,0) circle (2pt);
        \draw[->] (-2,0) -- (-1,1);
        \draw (-1,1) edge[->] node[above=1pt]{$\alpha$} (1,1);
        \draw[->] (1,1) -- (2,0);
        \draw[->, densely dotted] (2,0) to[bend left=60] (-2,0);
        
        \filldraw (-2,1) circle (2pt) node[above left=5pt]{$c$};
        \draw[->, dotted] (-1,1) -- (-2,1);
        \draw[->, dotted] (-2,1) -- (-2,0);
        \draw[dashed] (-2,1) -- (-3,1) -- (-4,2) -- (-3,3) -- (-2,2) -- cycle;
        \node at (-3,2) {$\Gamma$};
        
        \filldraw (0,2) circle (2pt) node[above=5pt]{$d$};
        \draw[->] (1,1) -- (0,2);
        \draw[->] (0,2) -- (-1,1);
        \filldraw (0,0) circle (2pt) node[below=5pt]{$d'$};
        \draw[->] (1,1) -- (0,0);
        \draw[->] (0,0) -- (-1,1);
        
        \filldraw (2,1) circle (2pt) node[above right=5pt]{$c'$};
        \draw[->, dotted] (2,1) -- (1,1);
        \draw[->, dotted] (2,0) -- (2,1);
        \draw[dashed] (2,1) -- (3,1) -- (4,2) -- (3,3) -- (2,2) -- cycle;
        \node at (3,2) {$\Gamma'$};
    \end{tikzpicture}}
\label{eq:one-cycle}
\end{equation}

\begin{equation}
\scalebox{0.75}{\begin{tikzpicture}[baseline={(0,0)}, shorten >=3pt, shorten <=3pt]
        \filldraw (0,0) circle (2pt) node[above=5pt]{$c$};
        
        \filldraw (1,1) circle (2pt);
        \filldraw (3,1) circle (2pt);
        \filldraw (2,2) circle (2pt);
        \draw[->, red] (0,0) -- (1,1);
        \draw[->, red] (1,1) -- (3,1);
        \draw[->, dotted] (3,1) -- (2,2);
        \draw[->, dotted] (2,2) -- (1,1);
        \draw[dashed] (2,2) -- (3,3) -- (3,4) -- (1,4) -- (1,3) -- cycle;
        \node at (2,3) {$\Gamma'$};
        
        \filldraw (1,-1) circle (2pt);
        \filldraw (3,-1) circle (2pt);
        \filldraw (2,-2) circle (2pt);
        \draw[->, red] (1,-1) -- (0,0);
        \draw[->, red]  (3,-1) -- (1,-1);
        \draw[->, dotted] (2,-2) -- (3,-1);
        \draw[->, dotted] (1,-1) -- (2,-2);
        \draw[dashed] (2,-2) -- (3,-3) -- (3,-4) -- (1,-4) -- (1,-3) -- cycle;
        \node at (2,-3) {$\Gamma'''$};
        
        \filldraw (-1,1) circle (2pt);
        \filldraw (-3,1) circle (2pt);
        \filldraw (-2,2) circle (2pt);
        \draw[->, blue] (-1,1) -- (0,0);
        \draw[->, blue]  (-3,1) -- (-1,1);
        \draw[->, dotted] (-2,2) -- (-3,1);
        \draw[->, dotted] (-1,1) -- (-2,2);
        \draw[dashed] (-2,2) -- (-3,3) -- (-3,4) -- (-1,4) -- (-1,3) -- cycle;
        \node at (-2,3) {$\Gamma$};
        
        \filldraw (-1,-1) circle (2pt);
        \filldraw (-3,-1) circle (2pt);
        \filldraw (-2,-2) circle (2pt);
        \draw[->, blue] (0,0) -- (-1,-1);
        \draw[->, blue] (-1,-1) -- (-3,-1);
        \draw[->, dotted] (-3,-1) -- (-2,-2);
        \draw[->, dotted] (-2,-2) -- (-1,-1);
        \draw[dashed] (-2,-2) -- (-3,-3) -- (-3,-4) -- (-1,-4) -- (-1,-3) -- cycle;
        \node at (-2,-3) {$\Gamma''$};
        
        \draw[->] (1,1) -- (-1,1);
        \draw[->] (-1,-1) -- (1,-1);
        
        \draw[->, densely dotted, red] (3,1) to[bend left=60] (3,-1);
        \draw[->, densely dotted, blue] (-3,-1) to[bend left=60] (-3,1);
    \end{tikzpicture}}
    \label{eq:two-cycles}
\end{equation}

The type $\tilde{D}$ quivers with a single central cycle make up the \emph{Type V family}.
These all have a sort of ``double spike'' motif; we show one example of \emph{Type V} in \eqref{eq:one-cycle}. The rest of the Type V family, types Va, Vb, V', Va', and Vb', can be obtained from \eqref{eq:one-cycle}, as we describe in \cref{sec:central_cycle}. 

The \emph{Type VI family} consists of the type $\tilde{D}$ quivers with two central cycles. An example of \emph{Type VI} is provided in \eqref{eq:two-cycles}, where we color each central cycle for clarity. Type VI' \eqref{eq:vi'} is an exceptional version of type VI in which both central cycles are of length 3 and $c$ is allowed to be a connecting vertex for a type $A$ quiver. \cref{thm:main-theorem}, which we prove in \cref{appdx:main-proof}, states that these subtypes together with the paired types give an exhaustive characterization of the mutation class of $\tilde{D}_{n-1}$ quivers.

\begin{equation}
\scalebox{0.75}{\begin{tikzpicture}[baseline={(0,0)}, shorten >=3pt, shorten <=3pt]
        \filldraw (0,0) circle (2pt) node[above=5pt]{$c$};
        
        \filldraw (1,1) circle (2pt);
        \draw[->] (0,0) -- (1,1);
        \draw[->] (1,1) -- (-1,1);
        
        \filldraw (1,-1) circle (2pt);
        \draw[->] (1,-1) -- (0,0);
        \draw[->] (-1,-1) -- (1,-1);
        
        \filldraw (-1,1) circle (2pt);
        \draw[->] (-1,1) -- (0,0);
        \draw[->] (-1,-1) -- (-1,1);
        
        \filldraw (-1,-1) circle (2pt);
        \draw[->] (0,0) -- (-1,-1);
        \draw[->] (1,1) -- (1,-1);
        
        \draw[dashed] (0,0) -- (2,1) -- (4,1) -- (4,-1) -- (2,-1) -- cycle;
        \node at (3,0) {$\Gamma$};
    \end{tikzpicture}}
    \label{eq:vi'}
\end{equation}
\section{Conclusion}

In this work we analyze a graph neural network trained to classify quivers as belonging to one of 6 different types, motivated by the theory of cluster algebras and the problem of quiver mutation equivalence. Using explainability techniques, we provide evidence that the model learns prediction rules that align with existing theory for one of these types (type $D$). Moreover, the model behavior which allows us to recover this result emerges from the model in an unsupervised manner---the model is not given any subtype labels, and yet is able to identify relevant blocks to recognize type $D$ quivers. Applying the same explainability techniques to another case, we also independently discover and prove a characterization of the mutation class of $\tilde{D}_{n-1}$ quivers. Taken together, our work provides more evidence supporting the idea that machine learning can be a valuable tool in the mathematician’s workflow by identifying novel patterns in mathematical data.

\section*{Impact Statement}

This work showcases an application of explainability to advance an application in algebraic combinatorics. It shares the societal consequences of machine learning and algebraic combinatorics in general, none which we feel must be specifically highlighted here.
\section*{Acknowledgements}
The authors would like to thank Scott Neville and Kayla Wright for helpful discussions. We also thank Pavel Tumarkin for bringing to our attention the work of \citet{henrich2011}.
This research was supported by the Mathematics for Artificial Reasoning in Science (MARS) initiative via the Laboratory Directed Research and Development (LDRD) investments at Pacific Northwest National Laboratory (PNNL). PNNL is a multi-program national laboratory operated for the U.S. Department of Energy (DOE) by Battelle Memorial Institute under Contract No. DE-AC05-76RL0-1830.

\bibliography{main}
\bibliographystyle{ICML2025}

\newpage
\appendix
\onecolumn

\section{Additional Background}

\subsection{Cluster algebras and quiver mutations}
\label{appdx:cluster-algebras}

A cluster algebra is a special type of commutative ring that is generated (in the algebraic sense) via a (possibly infinite) set of generators that are grouped into \emph{clusters}. A cluster algebra may have finitely or infinitely many generators, but the size of each cluster is always finite and fixed. A cluster algebra is said to be of \emph{rank} $n$ if each of the clusters contains $n$ generators, called \emph{cluster variables}. These clusters are related via an \emph{exchange property} which tells us how to transform one cluster to another \cite{fomin2001clusteralgebrasifoundations}.\footnote{In general, a cluster consists of both cluster variables and generators known as frozen variables (that lack this exchange property).} It turns out that there is a nice combinatorial interpretation of this transformation when we interpret clusters as quivers with each generator corresponding to a vertex in the quiver. Then quiver mutation describes this exchange of cluster variables. In this setting, the mutation equivalence problem asks when two clusters generate the same cluster algebra.

Quiver mutation also appears in physics in the form of \emph{Sieberg duality}. Seiberg duality is an important low-energy identification of naively distinct non-Abelian four dimensional $N=1$ supersymmetric gauge theories, where gluons and quarks of one theory are mapped to non-Abelian magnetic monopoles of the other, and vice versa, but result non-trivially in the same long-distance (low-energy) physics \cite{Seiberg_1995}. Quiver gauge theories in string theory are $N=1$ supersymmetric quantum field theories that have field and matter field content defined in terms of their superpotential read off from an associated quiver diagram.  They arise in a number of scenarios, including the low energy effective field theory associated to D-branes at a singularity. The superpotential and its associated quiver are defined by the representation theory of the finite group associated to the singularity (e.g., if it is of ADE type), via the McKay correspondence \cite{Greene_1999}. Associated to quivers are cluster algebras, and quiver mutations in this physical context are maps that identify naively distinct $N=1$ $4d$ supersymmetric gauge theories under Sieberg duality in the low-energy (IR) limit. There is an extensive literature in this area of physics, some earlier work has used machine learning techniques to study pairs of candidate Seiberg dual physical theories related by quiver mutations \cite{bao2020quiver}. We hope that the present work may be helpful for those working to develop an improved mathematical understanding of both Seiberg duality and quiver gauge theory in general.

\subsection{Mutation-Finite Quivers}
\label{appdx:mutation-finite}

Quivers of type $\tilde{D}_n$---the main subject of our study---are \emph{mutation-finite}. That is, they have a finite mutation equivalence class. Mutation-finite quivers and their associated cluster algebras are of interest to many cluster algebraists. Felikson, Shapiro, and Tumarkin \citeyear{felikson2012cluster} gave a description of the mutation-finite quivers in terms of \emph{geometric type} (those arising from triangulations of bordered surfaces), the $E_6, E_7, E_8$ Dynkin diagrams and their extensions, and two additional exceptional types $X_6$ and $X_7$ identified by Derksen and Owen \cite{derksen2008newgraphsfinitemutation}. Specifically, they showed that mutation-finite quivers must either be decomposable into certain \emph{blocks} or contain a subquiver which is mutation equivalent to $E_6$ or $X_6$.
It will be occasionally useful to refer to these blocks in in \cref{appdx:affine-d}, so we describe them here. However, block decompositions are not unique in general, so we will not place too much emphasis on the block decompositions of each quiver we describe.

\begin{definition}[\citealt{Felikson_2012}]
A \emph{block} is one of six graphs shown in \cref{fig:blocks}, where each vertex is either an \emph{outlet} or a \emph{dead end} \cite{fomin2007clusteralgebrastriangulatedsurfaces}. 
A connected quiver $Q$ is \emph{block-decomposable} if it can be obtained by gluing together blocks at their outlets, such that each vertex is part of at most two blocks.
Formally, one constructs a block-decomposable quiver as follows:
\begin{enumerate}
    \item[(i)] Take a partial matching of the combined set of outlets
    (no outlet may be matched to an outlet from the same block);
    \item[(ii)] Identify the outlets in each pair of the matching;
    \item[(iii)] If the resulting quiver contains a pair of edges which form a 2-cycle, remove them.
\end{enumerate}
\end{definition}
\begin{figure}[ht]
    \centering
    \begin{tikzpicture}[baseline={(0,0)}, shorten >=3pt, shorten <=3pt]
    \draw (-1,0) circle (2pt);
    \draw (1,0) circle (2pt);
    \draw[->] (-1,0) -- (1,0);
    \node at (0, -.5) {$B_\mathrm{I}$};
\end{tikzpicture}
\hspace{2pt}
\begin{tikzpicture}[baseline={(0,0)}, shorten >=3pt, shorten <=3pt]
    \draw (-1,0) circle (2pt);
    \draw (1,0) circle (2pt);
    \draw (0,1) circle (2pt);
    \draw[->] (-1,0) -- (1,0);
    \draw[->] (1,0) -- (0,1);
    \draw[->] (0,1) -- (-1,0);
    \node at (0, -.5) {$B_\mathrm{II}$};
\end{tikzpicture}
\begin{tikzpicture}[baseline={(0,0)}, shorten >=3pt, shorten <=3pt]
    \filldraw (-1,1) circle (2pt);
    \filldraw (1,1) circle (2pt);
    \draw (0,0) circle (2pt);
    \draw[->] (-1,1) -- (0,0);
    \draw[->] (1,1) -- (0,0);
    \node at (0, -.5) {$B_\mathrm{IIIa}$};
\end{tikzpicture}
\hspace{2pt}
\begin{tikzpicture}[baseline={(0,0)}, shorten >=3pt, shorten <=3pt]
    \filldraw (-1,1) circle (2pt);
    \filldraw (1,1) circle (2pt);
    \draw (0,0) circle (2pt);
    \draw[<-] (-1,1) -- (0,0);
    \draw[<-] (1,1) -- (0,0);
    \node at (0, -.5) {$B_\mathrm{IIIb}$};
\end{tikzpicture}
\hspace{2pt}
\begin{tikzpicture}[baseline={(0,0)}, shorten >=3pt, shorten <=3pt]
    \filldraw (0,0) circle (2pt);
    \filldraw (0,2) circle (2pt);
    \draw (1,1) circle (2pt);
    \draw (-1,1) circle (2pt);
    \draw[->] (0,0) -- (1,1);
    \draw[->] (-1,1) -- (0,0);
    \draw[->] (0,2) -- (1,1);
    \draw[->] (-1,1) -- (0,2);
    \draw[->] (1,1) -- (-1,1);
    \node at (0, -.5) {$B_\mathrm{IV}$};
\end{tikzpicture}
\begin{tikzpicture}[baseline={(0,-1)}, shorten >=3pt, shorten <=3pt]
    \draw (0,0) circle (2pt);
    \filldraw (1,1) circle (2pt);
    \filldraw (-1,1) circle (2pt);
    \filldraw (1,-1) circle (2pt);
    \filldraw (-1,-1) circle (2pt);
    \draw[->] (-1,-1) -- (1,-1);
    \draw[->] (1,1) -- (1,-1);
    \draw[->] (1,1) -- (-1,1);
    \draw[->] (-1,-1) -- (-1,1);
    \draw[<-] (1,1) -- (0,0);
    \draw[->] (-1,1) -- (0,0);
    \draw[->] (1,-1) -- (0,0);
    \draw[<-] (-1,-1) -- (0,0);
    \node at (0, -1.5) {$B_\mathrm{V}$};
\end{tikzpicture}
    \caption{Blocks of type I-V introduced by \cite{fomin2007clusteralgebrastriangulatedsurfaces}. Open circles denote \emph{outlets}, which may be identified with at most one outlet from another block. Closed circles represent \emph{dead ends}, which may not be identified with any other vertex.}
    \label{fig:blocks}
\end{figure}

It is worth noting that classification in the mutation-finite setting has proven to be more challenging than in the finite setting. The
classification of \emph{mutation-finite} cluster algebras in the case with no frozen variables was achieved nearly a decade after Fomin and Zelevinsky classified finite cluster algebras \cite{felikson2012cluster, Felikson_2012}, and the general case was solved only last year \cite{felikson2024cluster}.
\section{Implementation Details}

\subsection{Model Architecture}
\label{appdx:dir-gine}

Here we describe in detail our Directed Graph Isomorphism Network with Edge Features (DirGINE). The DirGINE uses a message-passing scheme, maintaining a representation of each node. In each layer, each node's representation is updated according to a parameterized function of its neighbors' representations. Formally, the $\ell$-th layer is given by
\begin{equation}
    x_v^{(\ell)} = \ReLU\left(
        W^{(\ell)} x_v^{(\ell-1)} + \sum_{(u,v) \in E} \varphi_{\mathrm{in}}^{(\ell)} \left(x_u^{(\ell - 1)}, e_{uv}\right) + \sum_{(v,w) \in E} \varphi_{\mathrm{out}}^{(\ell)} \left(x_w^{(\ell - 1)}, e_{vw}\right)
    \right)
\end{equation}
where $W^{(\ell)}$ is an affine transformation and $\varphi_{\mathrm{in}}^{(\ell)}$ and $\varphi_{\mathrm{out}}^{(\ell)}$ are feedforward neural networks with 2 fully connected layers. Because we are classifying graphs, we use \emph{sum pooling}. That is, in the final layer $L$ we can assign a vector to the entire graph $G$ by adding the vectors associated with each vertex in the layer. We write
\begin{equation}
    \Phi(G) = \Phi^{(L)}(G) = \sum_{v \in V(G)} x_v^{(L)}.
    \label{eq:sum-pooling}
\end{equation}

The expressive power of graph neural networks is intimately connected to the classical Weisfeiler-Lehman (WL) graph isomorphism test \cite{weisfeiler1968reduction}. Given an undirected graph with constant node features and no edge features, a graph neural network cannot distinguish two graphs which are indistinguishable by the WL test \cite{xu2018powerful}, and graph neural networks are able to count some (but not all) substructures \cite{chen2020can}. In our case, operating on directed graphs with edge features slightly enhances the expressive power of our network, as the WL tests for attributed and directed graphs is strictly stronger than the undirected WL test \cite{beddar2022extension}. As we saw in \cref{sec:mutation-types}, the ability to distinguish directed substructures is crucial to their application in classifying quiver mutation classes. (For example, distinguishing a Type $D_n$ from a Type $\widetilde{A}_{n-1}$ quiver.)

\subsection{Further Discussion of PGExplainer}
\label{sec:appendix_pgex}

While a number of post-hoc explanation methods exist for GNNs, most fall into one of two categories:
\begin{itemize}
    \item[(i)] \emph{Gradient-based} methods use the partial derivatives of the model output with respect to input features. A larger gradient is assumed to mean that a feature is more important. \item[(ii)]\emph{Perturbation-based} methods observe how the model's predictions change when features are removed or distorted. Larger changes indicate greater importance.
\end{itemize}

The method we adopt, PGExplainer \cite{luo2020parameterized}, is a perturbation-based method which produces edge attributions using a neural network $g$. Using the final node embeddings of $u$ and $v$ as well as any edge features $e_{uv}$, $g$ produces an attribution
\begin{equation}
    \omega_{uv} = g(x_u^{(L)}, x_v^{(L)}, e_{uv}).
\end{equation}
We use the implementation provided by PyTorch Geometric \cite{fey2019fastgraphrepresentationlearning}, where $g$ is implemented as an MLP followed by a sigmoid function to ensure that $0 \leq \omega_{uv} \leq 1$. Then rather than produce a ``hard'' subgraph as our explanatory graph $G_S$, the attribution matrix $\Omega = (\omega_{ij})$ can be seen as a ``soft'' mask for the adjacency matrix $A(G)$. That is, instead of providing a binary 0-1 attribution for each edge, PGExplainer provides an attribution $\omega_{ij} \in [0,1]$. We then use the weighted graph with adjacency matrix $\Omega \odot A(G)$ for $G_S$ (where $\Omega \odot A(G)$ is the elementwise product).

PGExplainer follows prior work \cite{ying2019gnnexplainer} in interpreting $G_S$ as a random variable with expectation $\Omega = \mathbb{E}[A(G_S)]$, where each edge $(i,j)$ is assigned a Bernoulli random variable with expectation $\omega_{ij}$. PGExplainer then attempts to maximize the \emph{mutual information} $I(\Phi(G), G_S)$. However, because this is intractable in practice, the actual optimization objective is
\begin{equation}
    \min_\Omega \operatorname{CE}(\Phi(G), \Phi(G_S)) + \alpha \lVert\Omega\rVert_1 + \beta H(\Omega).
    \label{eq:pgexplainer-objective}
\end{equation}
Here $\operatorname{CE}(\Phi(G), \Phi(G_S))$ is the cross-entropy loss between the predictions $\Phi(G)$ and $\Phi(G_S)$, $\lVert\Omega\rVert_1$ is the $L_1$-norm of $\Omega$,
\begin{equation}
    H(\Omega) = - \sum_{(i,j) \in E} \sum_{(i,k) \in E} [(1-\omega_{ij})\log(1-\omega_{ik}) + \omega_{ij}\log(\omega_{ik})],
\end{equation}
and $\alpha$ and $\beta$ are hyperparameters. In our analysis, we use hyperparameter values $\alpha = 2.5$ and $\beta = 0.1$. The $\lVert\Omega\rVert_1$ term acts as a size constraint, penalizing the size of the selected $G_S$. The $H(\Omega)$ term acts as a connectivity constraint, penalizing instances where two incident edges are given very different attributions. By training a neural network to compute $\Omega$, PGExplainer allows us to generate explanations for new graphs very quickly, as well as take a more global view of the model behavior.
\section{Data Generation and Model Training}
\label{appdx:data-generation}

Quivers were generated using Sage \cite{sagemath, musiker2011compendiumclusteralgebraquiver}. For training and inference, each quiver was converted to PyTorch Geometric \cite{fey2019fastgraphrepresentationlearning}. Following the representation convention in Sage, $k$ parallel edges are represented by a single edge with edge attribute $(k,-k)$, and each vertex is initialized with constant node feature.
The train set consists of:
\begin{itemize}
    \item All quivers of types $A$, $D$, $\tilde{A}$, and $\tilde{D}$ on 7, 8, 9, and 10 nodes.
    \item All quivers of type $\tilde{E}$. (Type $\tilde{E}$ is only defined for 7, 8, and 9 nodes, corresponding to extended versions of $E_6$, $E_7$, and $E_8$, respectively. All quivers of type $\tilde{E}$ are mutation-finite.)
    \item All quivers of type $E$ for $n = 6, 7, 8$. (The Dynkin diagram $E_9$ is the same as the extended diagram $\tilde{E}_8$.) Type $E$ is only mutation-finite for $n = 6, 7, 8$. and coincides with $\tilde{E}_8$ for $n = 9$.
    \item Quivers of type $E_{10}$ up to a mutation depth of 8, with respect to Sage's standard orientation for $E_{10}$ (\cref{fig:e10-orientation}). (While type $E$ is mutation finite for $n\leq9$, $E_{10}$ is mutation-infinite).
\end{itemize}

\begin{figure}[h]
    \centering
        \begin{tikzpicture}
            \graph[grow right sep=1cm, nodes={fill, circle, inner sep = 1.5pt}, empty nodes] {0[label=1] -> 1[label=2] <- 2[label=3] -> 3[label=4] <- 4[label=5] -> 5[label=6] <- 6[label=7] -> 7[label=8] <- 8[label=9], 2 -> 9[xshift=2.33cm, label=below:10]};
        \end{tikzpicture}
    \caption{Default orientation of $E_{10}$ in Sage. Mutation depth is assessed with respect to this orientation for generating data in Sage.}
    \label{fig:e10-orientation}
\end{figure}

The test set consists of quivers on 11 nodes. We use all quivers of type $A_{11}, \tilde{A}_{10}, D_{11}$ and $\tilde{D}_{10}$, and again generate quivers up to a mutation depth of 8 for $E_{11}$. The number of quivers of each size from each class can be found in \cref{tab:data_samples}. Note that type $\tilde{E}$ is absent from the test set, because $\tilde{E}$ is not defined for 11 nodes. The class of type $\tilde{A}$ quivers is also unique, as the collection of $\tilde{A}_{n-1}$ is actually partitioned into $\lfloor n/2 \rfloor$ distinct mutation classes, as described by \cite{Bastian_2011}.

For the size generalization experiment in \cref{subsubsec:size-generalization}, we only generate quivers up to a mutation depth of 6, as the number of distinct quivers grows too quickly with size to generate classes exhaustively. When the number of generated quivers exceeds 100,000, we randomply subsample 100,000 quivers to avoid out-of-memory errors.

\begin{table}[htb]
    \centering
    \begin{tabular}{c|rrrrrr}
        \hline
        \multicolumn{7}{c}{\textbf{Train}} \\
        \hline
        $n$ & $A_n$ & $D_n$ & $E_n$ & $\tilde{A}_{n-1}$ & $\tilde{D}_{n-1}$ & $\tilde{E}_{n-1}$ \\
        \hline
         7 &    150 &    246 &    416 &    340 &    146 &   132 \\
         8 &    442 &    810 &  1,574 &  1,265 &    504 & 1,080 \\
         9 &  1,424 &  2,704 &    --- &  4,582 &  1,868 & 4,376 \\
        10 &  4,522 &  9,252 & 10,906 & 16,382 &  6,864 &   --- \\
        \hline
        \multicolumn{7}{c}{\textbf{Test}} \\
        \hline
        11 & 14,924 & 32,066 & 24,060 & 63,260 & 25,810 &   --- \\
        \hline
        \multicolumn{7}{c}{}
    \end{tabular}
    
    \caption{Number of quivers of each type and size in train and test sets.}
    \label{tab:data_samples}
\end{table}

We train with the Adam optimizer for 50 epochs with a batch size of 32 using cross-entropy loss with $L_1$ regularization $(\gamma = 5\times 10^{-6})$ using an Nvidia RTX A2000 Laptop GPU.
\section{The Mutation Class of \boldmath\texorpdfstring{$\tilde{D}_{n-1}$}{\tilde{D}\_{n-1}} Quivers}
\label{appdx:affine-d}


\cref{thm:main-theorem} groups the mutation class of $\tilde{D}_{n-1}$ quivers into three families: paired types, quivers with one central cycle, and quivers with two central cycles. 
A complete list of paired types is shown in
\cref{fig:paired-types}. 
Here we provide additional details about the subtypes with one and two central cycles.

\subsection{One central cycle}
\label{sec:central_cycle}

\textbf{Types V, Va, Vb.} Type V quivers resemble Type IV quivers of class $D$, but one edge in the central cycle is part of a $B_{\mathrm{IV}}$ block that appears in the Type II quiver of class $D$. In the diagram below, this is the edge $\alpha : a \to b$. Note that 
no larger subquiver may be attached to $d$ and $d'$.
\begin{equation}
    \begin{tikzpicture}[baseline={(0,1)}, shorten >=3pt, shorten <=3pt]
        \filldraw (-2,0) circle (2pt);
        \filldraw (-1,1) circle (2pt) node[above left=5pt]{$a$};
        \filldraw (1,1) circle (2pt) node[above right=5pt]{$b$};
        \filldraw (2,0) circle (2pt);
        \draw[->] (-2,0) -- (-1,1);
        \draw (-1,1) edge[->] node[above=1pt]{$\alpha$} (1,1);
        \draw[->] (1,1) -- (2,0);
        \draw[->, densely dotted] (2,0) to[bend left=60] (-2,0);
        
        \filldraw (-2,1) circle (2pt) node[above left=5pt]{$c$};
        \draw[->, dotted] (-1,1) -- (-2,1);
        \draw[->, dotted] (-2,1) -- (-2,0);
        \draw[dashed] (-2,1) -- (-3,1) -- (-4,2) -- (-3,3) -- (-2,2) -- cycle;
        \node at (-3,2) {$\Gamma$};
        
        \filldraw (0,2) circle (2pt) node[above=5pt]{$d$};
        \draw[->] (1,1) -- (0,2);
        \draw[->] (0,2) -- (-1,1);
        \filldraw (0,0) circle (2pt) node[below=5pt]{$d'$};
        \draw[->] (1,1) -- (0,0);
        \draw[->] (0,0) -- (-1,1);
        
        \filldraw (2,1) circle (2pt) node[above right=5pt]{$c'$};
        \draw[->, dotted] (2,1) -- (1,1);
        \draw[->, dotted] (2,0) -- (2,1);
        \draw[dashed] (2,1) -- (3,1) -- (4,2) -- (3,3) -- (2,2) -- cycle;
        \node at (3,2) {$\Gamma'$};
    \end{tikzpicture}
    \label{eq:type-v}
\end{equation}

Mutating Type V at $d$ produces the subtype \textbf{Type Va}, which is similar, but 
the block $B_{\mathrm{IV}}$ is replaced with an oriented 4-cycle, as shown below.
\begin{equation}
    \begin{tikzpicture}[baseline={(0,1)}, shorten >=3pt, shorten <=3pt]
        \filldraw (-2,0) circle (2pt);
        \filldraw (-1,1) circle (2pt) node[above left=5pt]{$a$};
        \filldraw (1,1) circle (2pt) node[above right=5pt]{$b$};
        \filldraw (2,0) circle (2pt);
        \draw[->] (-2,0) -- (-1,1);
        \draw[->] (1,1) -- (2,0);
        \draw[->, densely dotted] (2,0) to[bend left=60] (-2,0);
        
        \filldraw (-2,1) circle (2pt) node[above left=5pt]{$c$};
        \draw[->, dotted] (-1,1) -- (-2,1);
        \draw[->, dotted] (-2,1) -- (-2,0);
        \draw[dashed] (-2,1) -- (-3,1) -- (-4,2) -- (-3,3) -- (-2,2) -- cycle;
        \node at (-3,2) {$\Gamma$};
        
        \filldraw (0,2) circle (2pt) node[above=5pt]{$d$};
        \draw[->] (0,2) -- (1,1);
        \draw[->] (-1,1) -- (0,2);
        \filldraw (0,0) circle (2pt) node[below=5pt]{$d'$};
        \draw[->] (1,1) -- (0,0);
        \draw[->] (0,0) -- (-1,1);
        
        \filldraw (2,1) circle (2pt) node[above right=5pt]{$c'$};
        \draw[->, dotted] (2,1) -- (1,1);
        \draw[->, dotted] (2,0) -- (2,1);
        \draw[dashed] (2,1) -- (3,1) -- (4,2) -- (3,3) -- (2,2) -- cycle;
        \node at (3,2) {$\Gamma'$};
    \end{tikzpicture}
    \label{eq:type-va}
\end{equation}

Mutating type Va at $d'$ produces the subtype \textbf{Type Vb}, which is similar to \eqref{eq:type-v} except the block $B_{\mathrm{IV}}$ is reversed.
\begin{equation}
    \begin{tikzpicture}[baseline={(0,1)}, shorten >=3pt, shorten <=3pt]
        \filldraw (-2,0) circle (2pt);
        \filldraw (-1,1) circle (2pt) node[above left=5pt]{$a$};
        \filldraw (1,1) circle (2pt) node[above right=5pt]{$b$};
        \filldraw (2,0) circle (2pt);
        \draw[->] (-2,0) -- (-1,1);
        \draw (1,1) edge[->] (-1,1);
        \draw[->] (1,1) -- (2,0);
        \draw[->, densely dotted] (2,0) to[bend left=60] (-2,0);
        
        \filldraw (-2,1) circle (2pt) node[above left=5pt]{$c$};
        \draw[->, dotted] (-1,1) -- (-2,1);
        \draw[->, dotted] (-2,1) -- (-2,0);
        \draw[dashed] (-2,1) -- (-3,1) -- (-4,2) -- (-3,3) -- (-2,2) -- cycle;
        \node at (-3,2) {$\Gamma$};
        
        \filldraw (0,2) circle (2pt) node[above=5pt]{$d$};
        \draw[->] (0,2) -- (1,1);
        \draw[->]  (-1,1) -- (0,2);
        \filldraw (0,0) circle (2pt) node[below=5pt]{$d'$};
        \draw[->] (-1,1) -- (0,0);
        \draw[->] (0,0) -- (1,1);
        
        \filldraw (2,1) circle (2pt) node[above right=5pt]{$c'$};
        \draw[->, dotted] (2,1) -- (1,1);
        \draw[->, dotted] (2,0) -- (2,1);
        \draw[dashed] (2,1) -- (3,1) -- (4,2) -- (3,3) -- (2,2) -- cycle;
        \node at (3,2) {$\Gamma'$};
    \end{tikzpicture}
    \label{eq:type-vb}
\end{equation}
Mutating again at $d$ creates a quiver isomorphic to Type Va by swapping $d$ and $d'$, and finally mutating once more at $d'$ results in Type V again.
Notice that the choice of $d$ and $d'$ is arbitrary.

\textbf{Types V', Va', Vb'.} 
The rest of this cluster consists of the following types, which we call V', Va', and Vb' as they are related to each other by an analogous sequence of mutations.
That is, starting from V', performing the sequence of mutations $\mu_d, \mu_{d'}$, $\mu_d$, $\mu_{d'}$ yields Types Va', Vb', Va', V', in that order. Moreover, $\mu_c$ converts each of Type V', Va', Vb' into a corresponding Type V, Va, or Vb quiver, respectively. The Type V' to Type V case is shown in \cref{fig:lem-type-v'}.

\begin{minipage}{.33\textwidth}
\textbf{Type V'}
\begin{equation}
    \begin{tikzpicture}[baseline={(0,0)}, shorten >=3pt, shorten <=3pt]
    \filldraw (-1,0) circle (2pt) node[left=5pt]{$a$};
    \filldraw (1,0) circle (2pt) node[right=5pt]{$b$};
    
    \draw[->] (-1,2) -- (1,0);
    \draw[->] (-1,0) -- (-1,2);
    \filldraw (-1,2) circle (2pt) node[above left=5pt]{$d$};
    \draw[->] (1,2) -- (1,0);
    \draw[->] (-1,0) -- (1,2);
    \filldraw (1,2) circle (2pt) node[above right=5pt]{$d'$};
    \draw[->] (1,0) -- (0,-1);
    \draw[->] (0,-1) -- (-1,0);
    \filldraw (0,-1) circle (2pt) node[below=5pt]{$c$};
    \draw[dashed] (0,-1) -- (1,-2) -- (1, -3) -- (-1,-3) -- (-1,-2) -- cycle;
    \node at (0,-2) {$\Gamma$};
\end{tikzpicture}
    \label{eq:type-v'}
\end{equation}
\end{minipage}
\begin{minipage}{.33\textwidth}
\textbf{Type Va'}
\begin{equation}
    \begin{tikzpicture}[baseline={(0,0)}, shorten >=3pt, shorten <=3pt]
    \filldraw (-1,0) circle (2pt) node[left=5pt]{$a$};
    \filldraw (1,0) circle (2pt) node[right=5pt]{$b$};
    \draw (-1,0) edge[->] (1,0);
    
    \draw[->] (-1,0) -- (1,2);
    \draw[->] (-1,2) -- (-1,0);
    \filldraw (-1,2) circle (2pt) node[above left=5pt]{$d$};
    \draw[->] (1,2) -- (1,0);
    \draw[->] (1,0) -- (-1,2);
    \filldraw (1,2) circle (2pt) node[above right=5pt]{$d'$};
    \draw[->] (1,0) -- (0,-1);
    \draw[->] (0,-1) -- (-1,0);
    \filldraw (0,-1) circle (2pt) node[below=5pt]{$c$};
    \draw[dashed] (0,-1) -- (1,-2) -- (1, -3) -- (-1,-3) -- (-1,-2) -- cycle;
    \node at (0,-2) {$\Gamma$};
\end{tikzpicture}
    \label{eq:type-va'}
\end{equation}
\end{minipage}
\begin{minipage}{.33\textwidth}
\textbf{Type Vb'}
\begin{equation}
    \begin{tikzpicture}[baseline={(0,0)}, shorten >=3pt, shorten <=3pt]
        \filldraw (-1,0) circle (2pt) node[left=5pt]{$a$};
        \filldraw (1,0) circle (2pt) node[right=5pt]{$b$};
        \draw (-1,0) edge[->, bend left] (1,0);
        \draw (-1,0) edge[->, bend right] (1,0);
        
        \draw[->] (1,0) -- (-1,2);
        \draw[->] (-1,2) -- (-1,0);
        \filldraw (-1,2) circle (2pt) node[above left=5pt]{$d$};
        \draw[->] (1,0) -- (1,2);
        \draw[->] (1,2) -- (-1,0);
        \filldraw (1,2) circle (2pt) node[above right=5pt]{$d'$};
        \draw[->] (1,0) -- (0,-1);
        \draw[->] (0,-1) -- (-1,0);
        \filldraw (0,-1) circle (2pt) node[below=5pt]{$c$};
        \draw[dashed] (0,-1) -- (1,-2) -- (1, -3) -- (-1,-3) -- (-1,-2) -- cycle;
        \node at (0,-2) {$\Gamma$};
    \end{tikzpicture}
    \label{eq:type-vb'}
\end{equation}
\end{minipage}

To see that these types are mutation equivalent to \eqref{eq:affine-d}, it suffices to show that Type V
are mutation equivalent to one of the paired types.

\begin{lemma}
Type V quivers \eqref{eq:type-v} are mutation equivalent to \eqref{eq:affine-d}.
\label{lem:type-v}
\end{lemma}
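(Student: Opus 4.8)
The plan is to exhibit an explicit sequence of mutations that carries a Type V quiver \eqref{eq:type-v} to one of the paired types. Since every paired type is, by construction, obtained from \eqref{eq:affine-d} by mutating the two type-$D$ halves $Q_1$ and $Q_2$ independently, each paired type is mutation equivalent to \eqref{eq:affine-d}; because mutation equivalence is an equivalence relation, it then follows that Type V is mutation equivalent to \eqref{eq:affine-d}. So it suffices to reach any single paired type.

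First I would fix notation: write $p$ and $q$ for the two vertices of the central cycle that are \emph{not} the outlets $a,b$ of the $B_{\mathrm{IV}}$ block, so the central cycle is $p \to a \xrightarrow{\alpha} b \to q \to p$, with the type-$A$ piece $\Gamma$ attached through its connecting vertex $c$ on the edge $p \to a$, the type-$A$ piece $\Gamma'$ attached through its connecting vertex $c'$ on the edge $b \to q$, and the $B_{\mathrm{IV}}$ block (dead ends $d,d'$) sitting on the edge $a \to b$.

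The heart of the argument is to collapse the central cycle. Following the idea behind Vatne's treatment of Type IV in \cref{subsec:type-d} — that mutating at a vertex incident to a central cycle but outside the attached type-$A$ pieces shrinks the cycle while leaving those pieces intact — I would mutate at $p$, then at $q$ (and, if needed, at the connecting vertices $c$, $c'$), never mutating inside $\Gamma$ or $\Gamma'$ and never mutating at the dead ends $d,d'$. Each such mutation replaces the four-cycle by a three-cycle and turns the mutated vertex into a spike, or at the connecting vertices splices $\Gamma$ or $\Gamma'$ onto the cycle along an honest type-$A$ path; by tracking the arrows created by paths through the mutated vertex and the $2$-cycles they cancel against, one checks that after a bounded number of steps the central cycle is opened entirely. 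The resulting quiver then has the shape of two type-$D$ quivers — one built from $\Gamma$ together with part of the old cycle and the $B_{\mathrm{IV}}$ block, the other from $\Gamma'$ and the remainder — glued along a shared connecting vertex, and reading off which of Vatne's subtypes I--IV each half realizes (\cref{thm:type-d-classification}) identifies it as a specific paired type, concretely one involving Type II of $D_n$, since that is the subtype carrying the $B_{\mathrm{IV}}$ block.

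I expect the bookkeeping near the $B_{\mathrm{IV}}$ block to be the main obstacle: mutations adjacent to the block transiently create doubled arrows into or out of its outlets and new arrows linking $\Gamma$, $\Gamma'$, and the cycle, and one must verify that each such arrow either cancels in a $2$-cycle or lands exactly where a paired type requires it — in particular that $d,d'$ remain dead ends with no larger subquiver attachable, as the statement of Type V demands. A secondary point is the degenerate small-$n$ cases, where $\Gamma$ or $\Gamma'$ is a single vertex or the central cycle already has length $3$; these should be handled by direct inspection. As a consistency check one could also track the block-decomposition framework of \cref{appdx:mutation-finite} through the mutations, though since block decompositions are not unique the explicit mutation sequence is the cleaner route.
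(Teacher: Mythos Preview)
Your overall strategy --- reduce a Type V quiver to one of the paired types and then invoke the observation that paired types are obtained from \eqref{eq:affine-d} by mutating the two halves independently --- matches the paper's, and your guess that the landing spot should involve Vatne's Type II (the $D_n$ subtype carrying a $B_{\mathrm{IV}}$ block) is on the right track. But the concrete mutation plan you propose does not get there.

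Mutating at the cycle vertices $p,q$ (or at the optional spike vertices $c,c'$) never leaves the Type V family. A mutation at a cycle vertex other than $a,b$ shrinks the central cycle by one and turns the mutated vertex into a spike, so you stay in Type V; once the cycle has length $3$, mutating at the remaining non-$a,b$ vertex yields not a paired type but the quiver the paper calls Type V' --- a separate $\tilde{D}$ subtype in which the ``double spike'' at $d,d'$ is still present. (Mutating at $c$ or $c'$ goes the other way, lengthening the cycle.) In fact, tracing through the closure analysis in \cref{appdx:main-proof}, every exit from the family $\{\text{V},\text{Va},\text{Vb},\text{V}',\text{Va}',\text{Vb}'\}$ to a paired type or to Types VI/VI' passes through a mutation at $a$ or $b$; a sequence using only $p,q,c,c'$ cycles indefinitely among these six subtypes. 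So your expectation that ``after a bounded number of steps the central cycle is opened entirely'' into two glued type-$D$ quivers is the point where the argument fails.

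The paper's proof is a single mutation at the $B_{\mathrm{IV}}$ outlet $a$. This is exactly the vertex you avoided: $\mu_a$ breaks the $B_{\mathrm{IV}}$ block --- the dead ends $d,d'$ become either leaves at $a$ (a Type I end) or, when the spike at $c$ is present, the outlets of a new $B_{\mathrm{IV}}$ block between $a$ and $c$ (a Type II end) --- while $a$ itself becomes a spike on the remaining central cycle. A short case split on whether the central cycle has length $>3$ or $=3$, and on which spikes are present, then reads off the resulting paired type (I-IV, II-IV, I-I, I-II, I-III, or II-III). A smaller issue: you implicitly fix the central cycle to have length $4$ by writing it as $p\to a\to b\to q\to p$, but in Type V it may have any length $\ge 3$.
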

\begin{proof}
    We mutate at vertex $a$. There are several cases. Recall from \cref{thm:type-d-classification} that a \emph{spike} refers to an oriented triangle on the central cycle.
    
    If the central cycle is of length $> 3$, there are two subcases: 
    \begin{enumerate}
        \item[(a)] If there is a spike at vertex $c$, then the resulting quiver is of Type II-IV, where the vertices $c$ and $a$ are playing the roles of $c$ and $c''$ in Figure~\ref{fig:paired-types}, respectively, and $b$ plays the role of $c''$.
        \item[(b)] Otherwise, the resulting quiver is of Type I-IV, where $d$ and $d'$ are the pair of dead ends. 
    \end{enumerate}
      
    If the central cycle is length 3, say a triangle $a \to b \to v \to a$, then there are four subcases, depending on the presence of spikes on the central cycle:
    \begin{enumerate}
          \item[(a)] If the central cycle has no additional spikes, then $\mu_a$ yields a Type I-I quiver.
      \item[(b)] If $a$ is part of a spike but $b$ is not, then the result is a Type I-II quiver, where $b$ and $v$ are the pair of dead ends on the Type I side and $d, d'$ are the dead ends in the $B_{\mathrm{IV}}$ block in the Type II side. 
      \item[(c)] If $a$ is not part of a spike but $b$ is then the result is Type I-III, where $d$ and $d'$ are the pair of dead ends in the Type I side. 
      \item[(d)] If both $a$ and $b$ are parts of spikes, then the result is Type II-III with dead ends $d, d'$ in the $B_{\mathrm{IV}}$ block in the Type II side.
    \end{enumerate}
\end{proof}
\begin{corollary}
    Quivers of Types Va \eqref{eq:type-va} and Vb \eqref{eq:type-vb} are mutation equivalent to \eqref{eq:affine-d}.
\end{corollary}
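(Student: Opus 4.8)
The plan is to reduce the corollary directly to \cref{lem:type-v} by exhibiting Types Va and Vb as short, explicit mutation sequences starting from a Type V quiver, and then invoking transitivity of mutation equivalence.

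First I would verify the assertion from the surrounding text that $\mu_d$ applied to the Type V quiver \eqref{eq:type-v} yields exactly the Type Va quiver \eqref{eq:type-va}, and that $\mu_{d'}$ applied to Type Va yields Type Vb \eqref{eq:type-vb}. The point that makes this clean is that ``no larger subquiver may be attached to $d$ and $d'$'': each of $d, d'$ has degree two, incident only to $a$ and $b$. Hence $\mu_d$ acts purely locally on the two arrows $b\to d$, $d\to a$ --- step (i) adds an arrow $b\to a$, step (ii) reverses those two arrows to $a\to d$, $d\to b$, and step (iii) cancels the $2$-cycle formed by the original $a\to b$ and the new $b\to a$ --- which turns the $B_{\mathrm{IV}}$ block on $\{a,b,d,d'\}$ into an oriented $4$-cycle through $a,d,b,d'$, as in \eqref{eq:type-va}, leaving everything outside $\{a,b,d\}$ untouched. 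The same bookkeeping at $d'$ turns that $4$-cycle back into a (reversed) $B_{\mathrm{IV}}$ block, giving \eqref{eq:type-vb}. Because $\mu_d$ and $\mu_{d'}$ never touch the central cycle, the other spikes, or the type-$A$ pieces $\Gamma, \Gamma'$, these identifications hold uniformly over all configurations allowed in a Type V quiver, so no case analysis is needed here.

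Given these two computations the corollary is immediate: Type Va $=\mu_d(\text{Type V})$ and Type Vb $=\mu_{d'}(\mu_d(\text{Type V}))$, so both lie in the mutation class of a Type V quiver, which by \cref{lem:type-v} is the mutation class of \eqref{eq:affine-d}; since mutation equivalence is an equivalence relation, we are done. I expect the only real effort --- and it is routine --- to be in the first step: checking arrow orientations and $2$-cycle cancellations carefully enough to confirm that the quivers produced are literally the published Type Va and Type Vb pictures rather than merely isomorphic to them.
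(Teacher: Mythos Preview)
Your proposal is correct and is precisely the argument the paper intends: the paper states in the text preceding the corollary that $\mu_d$ applied to Type V yields Type Va and $\mu_{d'}$ applied to Type Va yields Type Vb, so the corollary follows immediately from \cref{lem:type-v} and transitivity of mutation equivalence. Your explicit verification of the local mutations at $d$ and $d'$ (and the observation that they leave the rest of the quiver untouched since $d,d'$ are only adjacent to $a,b$) simply spells out what the paper leaves implicit.
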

\begin{corollary}
Type V' quivers \eqref{eq:type-v'} are mutation equivalent to \eqref{eq:affine-d}.
\label{cor:type-v'}
\end{corollary}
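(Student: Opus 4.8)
The plan is to reduce Corollary~\ref{cor:type-v'} to Lemma~\ref{lem:type-v} by a single mutation. Concretely, I would first verify the assertion already flagged in the surrounding text: mutating a Type V$'$ quiver \eqref{eq:type-v'} at the vertex $c$ produces a Type V quiver \eqref{eq:type-v}. This is a purely local computation, since $c$ has bounded degree --- in \eqref{eq:type-v'} it is incident only to $a$, $b$, and the connecting edge(s) joining it to the type $A$ piece $\Gamma$. Applying the three steps of the mutation rule at $c$ (add an arrow $i \to k$ for each path $i \to c \to k$, reverse all arrows at $c$, delete any resulting 2-cycles) transforms the configuration around $c$ in \eqref{eq:type-v'} into the configuration around $c$ in \eqref{eq:type-v}, with $\Gamma$ carried along unchanged because $c$ remains its connecting vertex throughout. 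This is exactly the content of \cref{fig:lem-type-v'}.

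Once this local identification is established, the corollary is immediate. By Lemma~\ref{lem:type-v}, the quiver $\mu_c$ of a Type V$'$ quiver --- being a Type V quiver --- is mutation equivalent to \eqref{eq:affine-d}; and since quiver mutation is an involution (so mutation equivalence is a genuine equivalence relation, as noted after the definition of mutation), the original Type V$'$ quiver is then mutation equivalent to \eqref{eq:affine-d} as well.

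The only real work is the local check in the first step, and the subtlety there is bookkeeping rather than ideas: one must confirm that no spurious 2-cycles survive $\mu_c$, that the arrow $\alpha : a \to b$ in the $B_{\mathrm{IV}}$ block of the resulting Type V quiver is produced correctly (or, in the degenerate case where the central cycle has length $3$, that the triangle through $c$ behaves as the Type V diagram demands), and that the dead-end vertices $d, d'$ are untouched since $c$ is not adjacent to them. For completeness I would also remark that the identical one-mutation argument --- using the parallel facts ``$\mu_c$ sends Type Va$'$ to Type Va'' and ``$\mu_c$ sends Type Vb$'$ to Type Vb'' together with the corollary on Types Va and Vb --- disposes of Types Va$'$ and Vb$'$, so the whole primed Type V family collapses onto \eqref{eq:affine-d} by the same mechanism.
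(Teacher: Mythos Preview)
Your proposal is correct and matches the paper's argument exactly: the paper states (in the text preceding the corollary and in \cref{fig:lem-type-v'}) that $\mu_c$ takes a Type V$'$ quiver to a Type V quiver, and the corollary then follows immediately from Lemma~\ref{lem:type-v}. Your additional remarks on the bookkeeping of the local mutation and on handling Types Va$'$, Vb$'$ by the same mechanism also agree with what the paper records.
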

\begin{corollary}
    Quivers of Types Va' \eqref{eq:type-va'} and Vb' \eqref{eq:type-vb'} are mutation equivalent to \eqref{eq:affine-d}.
\end{corollary}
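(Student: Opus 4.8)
The plan is to deduce this corollary directly from the single-mutation relationships already recorded in the surrounding text, using only that mutation equivalence is an equivalence relation. The entire mathematical content—the case analysis verifying which subtype each mutation produces—has already been carried out; what remains is a transitivity argument, so I expect no genuine obstacle here.

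Concretely, I would first recall the two facts stated just above the lemma on Type V: starting from a Type V' quiver \eqref{eq:type-v'}, applying $\mu_d$ yields Type Va' \eqref{eq:type-va'}, and applying $\mu_{d'}$ to that yields Type Vb' \eqref{eq:type-vb'} (these are the first two steps of the cycle $\mu_d, \mu_{d'}, \mu_d, \mu_{d'}$ that returns V', Va', Vb', Va', V'). Since applying a single mutation to a quiver produces a quiver in the same mutation class, both Va' and Vb' lie in the mutation class of V'. By \cref{cor:type-v'}, the class of V' is exactly that of \eqref{eq:affine-d}, so by transitivity both Va' and Vb' are mutation equivalent to \eqref{eq:affine-d}, as claimed.

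For completeness I would also note an equally short alternative route that parallels the unprimed case: the text records that $\mu_c$ sends Type Va' to Type Va and Type Vb' to Type Vb. Since the corollary following \cref{lem:type-v} already establishes that Types Va and Vb are mutation equivalent to \eqref{eq:affine-d}, the same conclusion for Va' and Vb' follows at once because $\mu_c$ is a single mutation and hence preserves the mutation class. The only thing one might wish to double-check, if a fully self-contained argument were desired, is that the asserted effects of $\mu_d$, $\mu_{d'}$, and $\mu_c$ on the local configurations in \eqref{eq:type-v'}–\eqref{eq:type-vb'} are correct; this is a routine direct computation applying the mutation rule to the few vertices $a, b, c, d, d'$ and their incident edges, and does not affect the logical skeleton of the proof.
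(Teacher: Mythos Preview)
Your proposal is correct and matches the paper's intent: the paper states this corollary without proof, as it is meant to follow immediately from the preceding discussion (the mutation relationships among V', Va', Vb' and between primed and unprimed types via $\mu_c$), combined with \cref{cor:type-v'} and the corollary after \cref{lem:type-v}. Both of your transitivity routes are exactly the kind of one-line argument the paper has in mind.
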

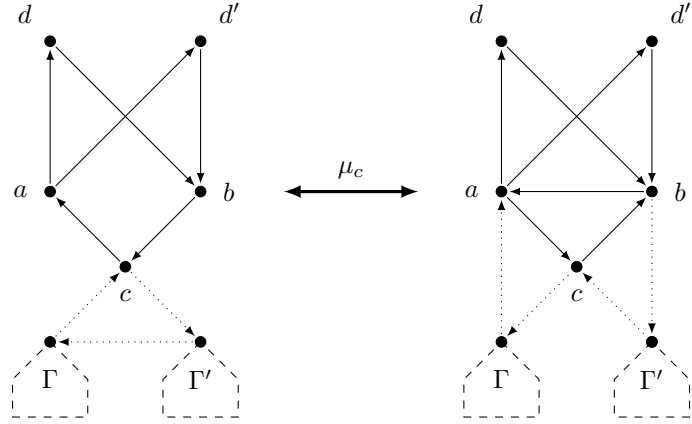
\begin{figure}[ht]
    \centering
    \begin{tikzpicture}[baseline={(0,0)}, shorten >=3pt, shorten <=3pt]
    \filldraw (-1,0) circle (2pt) node[left=5pt]{$a$};
    \filldraw (1,0) circle (2pt) node[right=5pt]{$b$};
    \draw[->] (-1,2) -- (1,0);
    \draw[->] (-1,0) -- (-1,2);
    \filldraw (-1,2) circle (2pt) node[above left=5pt]{$d$};
    \draw[->] (1,2) -- (1,0);
    \draw[->] (-1,0) -- (1,2);
    \filldraw (1,2) circle (2pt) node[above right=5pt]{$d'$};
    \draw[->] (1,0) -- (0,-1);
    \draw[->] (0,-1) -- (-1,0);
    \filldraw (0,-1) circle (2pt) node[below=5pt]{$c$};
    \draw[dotted, ->] (0,-1) -- (1,-2);
    \draw[dotted, ->] (1,-2) -- (-1,-2);
    \draw[dotted, ->] (-1,-2) -- (0,-1);
    \filldraw (-1,-2) circle (2pt);
    \draw[dashed] (-1,-2) -- (-.5,-2.5) -- (-.5, -3) -- (-1.5,-3) -- (-1.5,-2.5) -- cycle;
    \node at (-1,-2.5) {$\Gamma$};
    \filldraw (1,-2) circle (2pt);
    \draw[dashed] (1,-2) -- (.5,-2.5) -- (.5, -3) -- (1.5,-3) -- (1.5,-2.5) -- cycle;
    \node at (1,-2.5) {$\Gamma'$};
    
    \draw (2,0) edge[<->, very thick] node[above=1pt] {$\mu_c$} (4,0) ;
    
    \def\x{6}
    \filldraw (-1+\x,0) circle (2pt) node[left=5pt]{$a$};
    \filldraw (1+\x,0) circle (2pt) node[right=5pt]{$b$};
    \draw[->] (-1+\x,2) -- (1+\x,0);
    \draw[->] (-1+\x,0) -- (-1+\x,2);
    \filldraw (-1+\x,2) circle (2pt) node[above left=5pt]{$d$};
    \draw[->] (1+\x,2) -- (1+\x,0);
    \draw[->] (-1+\x,0) -- (1+\x,2);
    \filldraw (1+\x,2) circle (2pt) node[above right=5pt]{$d'$};
    \draw[<-] (1+\x,0) -- (0+\x,-1);
    \draw[<-] (0+\x,-1) -- (-1+\x,0);
    \draw[->] (1+\x,0) -- (-1+\x,0);
    \filldraw (0+\x,-1) circle (2pt) node[below=5pt]{$c$};
    \draw[dotted, <-] (0+\x,-1) -- (1+\x,-2);
    \draw[dotted, <-] (-1+\x,-2) -- (0+\x,-1);
    \filldraw (-1+\x,-2) circle (2pt);
    \draw[dotted, ->] (-1+\x,-2) -- (-1+\x,0);
    \filldraw (1+\x,-2) circle (2pt);
    \draw[dotted, ->] (1+\x,0) -- (1+\x,-2);
    \draw[dashed] (-1+\x,-2) -- (-.5+\x,-2.5) -- (-.5+\x, -3) -- (-1.5+\x,-3) -- (-1.5+\x,-2.5) -- cycle;
    \node at (-1+\x,-2.5) {$\Gamma$};
    \filldraw (1+\x,-2) circle (2pt);
    \draw[dashed] (1+\x,-2) -- (.5+\x,-2.5) -- (.5+\x, -3) -- (1.5+\x,-3) -- (1.5+\x,-2.5) -- cycle;
    \node at (1+\x,-2.5) {$\Gamma'$};
\end{tikzpicture}
    \caption{Performing $\mu_c$ to convert between a quiver of Type V' (left) and Type V (right). Note that in the Type V quiver, the central cycle is $a \to c \to b \to a$, so the vertex $c$ is not a connecting vertex as it is in \ref{eq:type-v}.}
    \label{fig:lem-type-v'}
\end{figure}
This completes the description of new types with one central cycle.

\subsection{Two central cycles}

The other family, with quivers with two central cycles, consists of the following:

\textbf{Type VI.}
Quivers of Type VI consist of two Type IV quivers which share one vertex $c$ among both central cycles, and are further joined by two edges that create oriented triangles for which $c$ is a vertex. These oriented triangles can be seen as shared spikes among both central cycles. In \eqref{eq:type-vi}, we color one central cycle blue and one red for clarity. The central cycles may be any length $\geq 3$.
\begin{equation}
    \begin{tikzpicture}[baseline={(0,0)}, shorten >=3pt, shorten <=3pt]
        \filldraw (0,0) circle (2pt) node[above=5pt]{$c$};
        
        \filldraw (1,1) circle (2pt);
        \filldraw (3,1) circle (2pt);
        \filldraw (2,2) circle (2pt);
        \draw[->, red] (0,0) -- (1,1);
        \draw[->, red] (1,1) -- (3,1);
        \draw[->, dotted] (3,1) -- (2,2);
        \draw[->, dotted] (2,2) -- (1,1);
        \draw[dashed] (2,2) -- (3,3) -- (3,4) -- (1,4) -- (1,3) -- cycle;
        \node at (2,3) {$\Gamma'$};
        
        \filldraw (1,-1) circle (2pt);
        \filldraw (3,-1) circle (2pt);
        \filldraw (2,-2) circle (2pt);
        \draw[->, red] (1,-1) -- (0,0);
        \draw[->, red]  (3,-1) -- (1,-1);
        \draw[->, dotted] (2,-2) -- (3,-1);
        \draw[->, dotted] (1,-1) -- (2,-2);
        \draw[dashed] (2,-2) -- (3,-3) -- (3,-4) -- (1,-4) -- (1,-3) -- cycle;
        \node at (2,-3) {$\Gamma'''$};
        
        \filldraw (-1,1) circle (2pt);
        \filldraw (-3,1) circle (2pt);
        \filldraw (-2,2) circle (2pt);
        \draw[->, blue] (-1,1) -- (0,0);
        \draw[->, blue]  (-3,1) -- (-1,1);
        \draw[->, dotted] (-2,2) -- (-3,1);
        \draw[->, dotted] (-1,1) -- (-2,2);
        \draw[dashed] (-2,2) -- (-3,3) -- (-3,4) -- (-1,4) -- (-1,3) -- cycle;
        \node at (-2,3) {$\Gamma$};
        
        \filldraw (-1,-1) circle (2pt);
        \filldraw (-3,-1) circle (2pt);
        \filldraw (-2,-2) circle (2pt);
        \draw[->, blue] (0,0) -- (-1,-1);
        \draw[->, blue] (-1,-1) -- (-3,-1);
        \draw[->, dotted] (-3,-1) -- (-2,-2);
        \draw[->, dotted] (-2,-2) -- (-1,-1);
        \draw[dashed] (-2,-2) -- (-3,-3) -- (-3,-4) -- (-1,-4) -- (-1,-3) -- cycle;
        \node at (-2,-3) {$\Gamma''$};
        
        \draw[->] (1,1) -- (-1,1);
        \draw[->] (-1,-1) -- (1,-1);
        
        \draw[->, densely dotted, red] (3,1) to[bend left=60] (3,-1);
        \draw[->, densely dotted, blue] (-3,-1) to[bend left=60] (-3,1);
    \end{tikzpicture}
    \label{eq:type-vi}
\end{equation}

\textbf{Type VI'.}
In Type VI, the shared connecting vertex $c$ is not allowed to be a connecting vertex for a larger subquiver of Type $A$ in general. However, there is one exception to this when both central cycles are triangles and have no additional spikes. The result is a block $B_{\mathrm{V}}$ whose outlet is a connecting vertex for a type $A$ subquiver $\Gamma$. We refer to this as Type VI'. Notice that if the quiver has only five vertices, then $\Gamma$ is only one vertex, in which case there is no difference between Types VI and VI'.
\begin{equation}
    \begin{tikzpicture}[baseline={(0,0)}, shorten >=3pt, shorten <=3pt]
        \filldraw (0,0) circle (2pt) node[above=5pt]{$c$};
        
        \filldraw (1,1) circle (2pt);
        \draw[->] (0,0) -- (1,1);
        \draw[->] (1,1) -- (-1,1);
        
        \filldraw (1,-1) circle (2pt);
        \draw[->] (1,-1) -- (0,0);
        \draw[->] (-1,-1) -- (1,-1);
        
        \filldraw (-1,1) circle (2pt);
        \draw[->] (-1,1) -- (0,0);
        \draw[->] (-1,-1) -- (-1,1);
        
        \filldraw (-1,-1) circle (2pt);
        \draw[->] (0,0) -- (-1,-1);
        \draw[->] (1,1) -- (1,-1);
        
        \draw[dashed] (0,0) -- (2,1) -- (4,1) -- (4,-1) -- (2,-1) -- cycle;
        \node at (3,0) {$\Gamma$};
    \end{tikzpicture}
    \label{eq:type-vi'}
\end{equation}

Again, to show that these are mutation equivalent to \eqref{eq:affine-d}, it suffices to reduce to the paired types.
\begin{lemma}
Type VI quivers \eqref{eq:type-vi} are mutation equivalent to \eqref{eq:affine-d}.
\label{lem:type-vi}
\end{lemma}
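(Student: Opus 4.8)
\textbf{Proof proposal for \cref{lem:type-vi}.}

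The plan is to mimic the strategy of \cref{lem:type-v}: pick a well-chosen vertex on one of the two central cycles, mutate there, and show that the result collapses to a paired type (or to a Type V quiver, which we have already shown is mutation equivalent to \eqref{eq:affine-d} via \cref{lem:type-v}). The natural candidate is the shared vertex $c$, or one of the two vertices adjacent to $c$ along a central cycle; mutating at such a vertex should destroy one of the two central cycles while leaving the other intact, since the shared spikes through $c$ are exactly the local configuration that a single mutation can flip. First I would set up notation: call the two central cycles $C_{\mathrm{red}}$ and $C_{\mathrm{blue}}$ as in \eqref{eq:type-vi}, let the shared spike triangles be $c \to x \to y \to c$ type configurations, and isolate the local neighborhood of the vertex to be mutated. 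Then I would carry out the mutation step by step using the definition of $\mu_j$ (add shortcut arrows across length-two paths through $j$, reverse arrows at $j$, cancel 2-cycles), tracking how the two triangles at $c$ interact.

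Next I would organize the argument into cases according to the lengths of the two central cycles and the presence of additional spikes on each, exactly as in the proof of \cref{lem:type-v}. The cleanest reduction is likely: after mutating, one central cycle is broken and its former vertices, together with the attached type-$A$ subquivers $\Gamma, \Gamma', \Gamma'', \Gamma'''$, reassemble into a single larger type-$A$ piece $\Gamma_c$ with an appropriate connecting vertex, leaving a quiver that is visibly of Type IV (on the surviving cycle) with $\Gamma_c$ hanging off it — i.e. a paired type of the form (I or II or III)-IV, depending on whether $c$ and its neighbors were on spikes. One should check separately the degenerate case where the surviving cycle has length $3$, and the case where both cycles are triangles (which is where Type VI' lives), making sure the connecting-vertex hypothesis in the definition of a type-$A$ connecting vertex is actually satisfied after the mutation; in the length-$3$ case the result may land in Type V rather than a paired type, which is fine since \cref{lem:type-v} already closes that off. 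Throughout, \cref{lem:trees} and \cref{thm:type-A} are the workhorses for recognizing that the reassembled piece really is a mutation-class-$A$ quiver with a connecting vertex.

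The main obstacle I anticipate is bookkeeping in the mutation step: when mutating near $c$, the two shared spike triangles both feed length-two paths through the mutation vertex, so one has to be careful about which shortcut arrows get created, which get cancelled as 2-cycles, and how the orientations on the attached $\Gamma$'s interact with the newly created arrows. Getting the orientations exactly right — and in particular verifying that no unwanted parallel edges or 2-cycles survive, and that the degree/3-cycle conditions of \cref{thm:type-A} hold for $\Gamma_c$ — is the delicate part. A secondary subtlety is confirming that the vertex I choose to mutate at is never one that would have to be a connecting vertex carrying extra structure (the hypothesis ``no larger subquiver may be attached'' to certain vertices in \eqref{eq:type-vi}), so that the local picture I analyze is genuinely complete; handling the Type VI$'$ exception, where $c$ \emph{is} permitted to be a connecting vertex, requires its own short sub-argument, which I would state as a corollary in the same spirit as \cref{cor:type-v'}.
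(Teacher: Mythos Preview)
Your high-level strategy --- mutate at a well-chosen vertex and land in a paired type --- is exactly what the paper does, and you correctly single out the shared vertex $c$ as the natural candidate. But your picture of what the mutation actually produces is wrong, and this leads you to plan for a much longer argument than is needed.

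The paper's proof is two sentences: mutate at $c$. If both central cycles have length $>3$, the result is a paired Type IV--IV quiver; if a central cycle has length $3$, that cycle becomes a Type I or Type III piece (depending on whether it carried an additional spike). That is the entire argument. The point is that $\mu_c$ does \emph{not} ``destroy one of the two central cycles while leaving the other intact,'' as you anticipate; rather, it separates the two cycles so that each remains a central cycle of its own Type IV subquiver. Your alternative candidate --- mutating at a vertex adjacent to $c$ --- does not break a cycle either: as the paper records in the proof of \cref{thm:main-theorem}, such a mutation simply moves that vertex from one central cycle to the other (still Type VI), or yields Type Va when the shrinking cycle was a triangle. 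So the case split you set up (``one cycle broken, the other surviving as Type IV, result is (I or II or III)-IV, possibly Type V in degenerate cases'') is built on a miscomputed local picture; if you actually carried out the mutation at either candidate vertex you would not see what you describe.

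Consequently the heavy bookkeeping you anticipate --- tracking how the $\Gamma, \Gamma', \Gamma'', \Gamma'''$ reassemble into a single type-$A$ piece and checking the connecting-vertex conditions of \cref{thm:type-A} --- is unnecessary here: once $\mu_c$ is computed correctly, the result is immediately a paired type. Finally, Type VI$'$ is handled in the paper as the separate \cref{lem:type-vi'} (using \cref{lem:wlog-a} to straighten $\Gamma$ before mutating at $c$), not as a corollary of this lemma.
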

\begin{proof}
    If both central cycles are of length $>3$, then mutating at vertex $c$ results in a quiver Type IV-IV.
    If a central cycle is of length 3, then $\mu_c$ turns that central cycle into a subquiver of Type I or Type III, depending on whether or not that central cycle does not have or does have a third spike, respectively.
\end{proof}

For the case of Type VI' quivers, the following lemma from \cite{vatne2008mutationclassdnquivers} will be useful:
\begin{lemma}[\citealt{vatne2008mutationclassdnquivers}]
 Let $\Gamma \in \mathcal{M}^A_n$, $n \geq 2$, and let $c$ be a connecting vertex for $\Gamma$. Then there exists a sequence of mutations on $\Gamma$ such that: 
\begin{enumerate}
    \item[(i)] $\mu_c$ does not appear in the sequence (that is, we do not mutate at $c$);
    \item[(ii)] The resulting quiver is isomorphic to \eqref{eq:a} in \cref{subsec:type-a};
    \item[(iii)] Under this isomorphism, $c$ is mapped to 1.
\end{enumerate}
\label{lem:wlog-a}
\end{lemma}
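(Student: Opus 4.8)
The plan is to prove this by induction on the number of vertices $n$, using the structural description of type $A$ quivers from \cref{thm:type-A}: conditions (i)--(iv) force the underlying graph of $\Gamma$ to be a tree whose only nontrivial $2$-connected blocks are oriented $3$-cycles (triangles meet only at single vertices, and every remaining edge is a bridge). A connecting vertex $c$ is therefore either a leaf (degree $1$) or a degree-$2$ corner of a boundary triangle. The two tools I would lean on are the involutivity of mutation and the observation (in the spirit of \cref{lem:trees}) that a wrong orientation on a path can be corrected by mutating only at its non-$c$ vertices. The base case $n=2$ is immediate: $\Gamma$ is a single edge at $c$, and if it points the wrong way I flip it by mutating the other (non-$c$) vertex, then relabel $c \mapsto 1$.

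For the inductive step when $\deg(c)=1$, let $d$ be the unique neighbor of $c$. Since $cd$ is a bridge, \cref{thm:type-A} forces $\deg(d)\le 3$, and deleting $c$ yields $\Gamma_0 \in \mathcal{M}^A_{n-1}$ in which $d$ is again a connecting vertex (a leaf, or a degree-$2$ triangle corner). By induction a sequence of mutations of $\Gamma_0$ avoiding $d$ straightens it to $1 \to 2 \to \dots \to (n-1)$ with $d \mapsto 1$. Because $c$ is adjacent only to $d$ and these mutations touch neither $d$ nor $c$, the same sequence applied to $\Gamma$ leaves the pendant edge at $c$ intact, giving a path on $n$ vertices with $c$ attached to the endpoint $d$. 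If that edge already points away from $c$, I relabel $c \mapsto 1$ and finish; otherwise there is exactly one misoriented edge, at $c$, and I push the defect out the far end: each vertex carrying the defect is a local source with no in-arrows, so mutating it creates no new arrows---it merely reverses its two edges and moves the defect one step toward the far leaf, where a final mutation removes it. No mutation is at $c$, so relabeling $c \mapsto 1$ closes this case.

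The degree-$2$ case is the crux. Here $c$ lies in an oriented triangle $c \to a \to b \to c$ (up to reversal), so its neighbors are $a,b$ and the third side is $a \to b$. I would reduce to the leaf case by detaching $c$. Mutating at the out-neighbor $a$ deletes the $bc$ edge (the path $c \to a \to b$ creates $c \to b$, cancelling $b \to c$) and reverses $c \to a$ to $a \to c$; symmetrically one may mutate $b$. When the free edge (if any) at the chosen neighbor is oriented so that no new arrow is created at $c$, this single mutation turns $c$ into a leaf and the degree-$1$ case applies. The difficulty is the configuration where the free edges at both $a$ and $b$ point "against" detachment: then $\mu_a$ does not isolate $c$ but replaces the triangle $\{c,a,b\}$ by a new triangle $\{c,a,x\}$, with $x$ the far endpoint of $a$'s free edge, in effect sliding $c$'s triangle one step outward along the tree of triangles.

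The main obstacle, then, is to show this sliding terminates with $c$ a leaf. I expect to control it by a monotone quantity---for instance the number of vertices on the far side of $c$'s current triangle, or the distance from that triangle to the nearest leaf of the tree-of-triangles---and to verify that each sliding mutation strictly decreases it while never mutating $c$; once $c$ is a leaf the degree-$1$ analysis yields the linear orientation with $c \mapsto 1$. An appealing alternative that avoids most of the orientation bookkeeping is to argue in the triangulation model of type $A_n$ quivers (diagonals of a convex $(n+3)$-gon, with mutation realized as diagonal flips): a connecting vertex is exactly a diagonal cutting off an ear, so one can cut along $c$, recurse on the triangulation of the smaller polygon, and read off the fan triangulation corresponding to \eqref{eq:a} with $c$ as its first diagonal. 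I would carry out the reduction in whichever model keeps the ``never mutate $c$'' constraint cleanest.
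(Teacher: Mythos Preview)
The paper does not supply its own proof of this lemma: it is quoted verbatim from \cite{vatne2008mutationclassdnquivers} and used as a black box in the proof of \cref{lem:type-vi'}. So there is nothing in the present paper to compare your argument against.

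On its own merits, your induction on $n$ is a reasonable strategy, and the degree-$1$ case is essentially complete: deleting the leaf $c$, invoking the inductive hypothesis at $d$, and then pushing a single wrong arrow down the path by successively mutating at sources is correct and never touches $c$. The degree-$2$ case, however, is only a sketch. You correctly identify that mutating the out-neighbour $a$ of $c$ either detaches $c$ or slides $c$'s triangle to $\{c,a,x\}$, but you do not actually exhibit a monotone quantity that decreases, and the informal ``distance to the nearest leaf of the tree of triangles'' is not obviously monotone under your move once $a$ or $b$ has degree~$4$ (so that $x$ itself sits in a second triangle). Tracking this carefully is exactly the work that remains. The triangulation model you allude to is indeed the cleanest route---there, $c$ is an ear diagonal, flips of other diagonals are mutations avoiding $c$, and reducing the complementary polygon to a fan is straightforward---but you would still need to spell out why every flip sequence in that model avoids the diagonal corresponding to $c$ and lands on the fan with $c$ first. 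As written, the proposal is a sound plan with the crux left open rather than a proof.
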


\begin{lemma}
Type VI' quivers \eqref{eq:type-vi'} are mutation equivalent to \eqref{eq:affine-d}.
\label{lem:type-vi'}
\end{lemma}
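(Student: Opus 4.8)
The plan is to reduce Type~VI$'$ to a paired type, since paired types are mutation equivalent to \eqref{eq:affine-d} (as observed immediately after \eqref{eq:affine-d}). First, if $\Gamma$ is a single vertex then, as noted after \eqref{eq:type-vi'}, Type~VI$'$ coincides with Type~VI, and the claim is exactly \cref{lem:type-vi}. So assume $\Gamma$ has at least two vertices. By \cref{lem:wlog-a} there is a sequence of mutations that never mutates at $c$ and transforms $\Gamma$ into the oriented path \eqref{eq:a} with $c$ mapped to vertex $1$; since $c$ is untouched, the $B_{\mathrm{V}}$ block is preserved, so we may assume $\Gamma$ is this path, and by \cref{lem:trees} we may additionally fix whichever orientation of the edge from $c$ to its neighbor $v$ in $\Gamma$ is most convenient.

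Next I would compute $\mu_c$ directly. Because both central cycles of $B_{\mathrm{V}}$ are triangles carrying no spikes, this parallels the length-$3$ case in the proof of \cref{lem:type-vi}: the two ``shared-spike'' triangles collapse --- their outer edges form $2$-cycles with the arrows created in step (i) of mutation and are removed --- so what remains of $B_{\mathrm{V}}$ is a single $B_{\mathrm{IV}}$ block, with one vertex from each central triangle left as a pendant leaf at one outlet of this block; at the same time the arrow reversed out of $c$, together with the new arrows from $v$ into the block, carry the remaining path $\Gamma \setminus \{c\}$ onto the other outlet. One then checks that the resulting quiver is the paired type I--II (in the sense of \cref{thm:type-d-classification}): the Type~I half is the outlet $c$ together with its two new leaves, and the Type~II half is the $B_{\mathrm{IV}}$ block with $\Gamma \setminus \{c\}$ attached at the other outlet, which serves as its connecting vertex. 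Invoking the mutation equivalence of paired types with \eqref{eq:affine-d} then completes the proof.

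The main obstacle is carrying out the $\mu_c$ computation and correctly identifying its output. Under the gluing, $c$ lies on four oriented triangles and is moreover joined to $\Gamma$, hence has large degree; mutating there creates many candidate arrows and several $2$-cycle cancellations, and one must verify carefully both which arrows survive and that the surviving subquiver genuinely matches a paired type --- in particular that the relevant outlet of the new $B_{\mathrm{IV}}$ block is a connecting vertex for the attached path, and that the degenerate shared piece $\{c\}$ is legitimately accounted for. A minor secondary point is the orientation of the $c$--$v$ edge: one should fix the favorable orientation at the outset via \cref{lem:trees}, or else check separately that the other orientation likewise reduces to a paired type (or to a Type~VI quiver covered by \cref{lem:type-vi}).
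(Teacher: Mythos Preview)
Your approach is essentially the paper's: use \cref{lem:wlog-a} to straighten $\Gamma$ into a path with $c$ at the end, then mutate at $c$ and identify the result as the paired type I--II. One small correction: your appeal to \cref{lem:trees} is both unnecessary and not quite valid here --- \cref{lem:wlog-a} already forces $c$ to have in-degree $0$ and out-degree $1$ in $\Gamma$ (since $c$ is mapped to vertex $1$ of the oriented path \eqref{eq:a}), and \cref{lem:trees} gives no guarantee that its mutation sequence avoids $c$, so invoking it could disturb the $B_{\mathrm{V}}$ block.
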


\begin{proof}
    By \cref{lem:wlog-a}, we may mutate so that $c$ has in-degree 0 and out-degree 1 in $\Gamma$. Then mutating at vertex $c$ results in a quiver of Type I-II (cf. \cref{fig:vi'-to-i-ii}).
\end{proof}

\begin{figure}
    \centering
    \begin{tikzpicture}[baseline={(0,0)}, shorten >=3pt, shorten <=3pt]
    \filldraw (0,0) circle (2pt) node[above=5pt]{$c$};
    
    \filldraw (1,1) circle (2pt) node[right=5pt]{$v_2$};
    \draw[->] (0,0) -- (1,1);
    \draw[->] (1,1) -- (-1,1);
    
    \filldraw (1,-1) circle (2pt) node[right=5pt]{$v_4$};
    \draw[->] (1,-1) -- (0,0);
    \draw[->] (-1,-1) -- (1,-1);
    
    \filldraw (-1,1) circle (2pt) node[left=5pt]{$v_1$};
    \draw[->] (-1,1) -- (0,0);
    \draw[->] (-1,-1) -- (-1,1);
    
    \filldraw (-1,-1) circle (2pt) node[left=5pt]{$v_3$};
    \draw[->] (0,0) -- (-1,-1);
    \draw[->] (1,1) -- (1,-1);
    
    \filldraw (0,-2) circle (2pt);
    \draw[->] (0,0) -- (0,-2);
    
    \draw[dashed] (0,-2) -- (1,-3) -- (1,-4) -- (-1,-4) -- (-1,-3) -- cycle;
    \node at (0,-3) {$\Gamma$};
    
    \draw (2,0) edge[<->, very thick] node[above=1pt] {$\mu_c$} (4,0) ;
    
    \def\x{6}
    \filldraw (0+\x,0) circle (2pt) node[above=5pt]{$c$};
    
    \filldraw (1+\x,1) circle (2pt) node[right=5pt]{$v_2$};
    \draw[<-] (0+\x,0) -- (1+\x,1);
    
    \filldraw (1+\x,-1) circle (2pt) node[right=5pt]{$v_4$};
    \draw[<-] (1+\x,-1) -- (0+\x,0);
    \draw[->] (1+\x,-1) -- (0+\x,-2);
    
    \filldraw (-1+\x,1) circle (2pt) node[left=5pt]{$v_3$};
    \draw[->] (-1+\x,1) -- (0+\x,0);
    
    \filldraw (-1+\x,-1) circle (2pt) node[left=5pt]{$v_1$};
    \draw[->] (0+\x,0) -- (-1+\x,-1);
    \draw[->] (-1+\x,-1) -- (0+\x,-2);
    
    \filldraw (0+\x,-2) circle (2pt);
    \draw[<-] (0+\x,0) -- (0+\x,-2);
    
    \draw[dashed] (0+\x,-2) -- (1+\x,-3) -- (1+\x,-4) -- (-1+\x,-4) -- (-1+\x,-3) -- cycle;
    \node at (0+\x,-3) {$\Gamma$};
\end{tikzpicture}
    \caption{Performing $\mu_c$ on a quiver of Type VI' (left).}
    \label{fig:vi'-to-i-ii}
\end{figure}
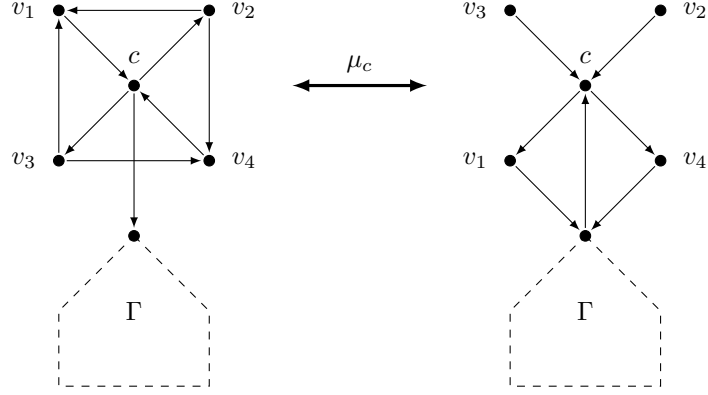

\begin{remark}
    Our characterization was achieved independently, without knowledge of that of \citet{henrich2011}. Although we will not discuss the overlap in detail, we give a brief description of how our characterization corresponds to \cite{henrich2011}, using the notation found therein:
    \begin{itemize}
        \item $D_{\star,\star'}$ in \citet{henrich2011} corresponds to our paired types, except when $(\star,\star') = ((\bigcirc,n),(\bigcirc,m))$ and $Q$ is in a so-called ``merged stage''. We call this type VI.
        \item $D_{(\bigcirc,n) \wedge \boxempty}$, $D_{(\bigcirc,n) \wedge \boxslash}$, $D_{(\bigcirc,n) \wedge \overleftrightarrow{\boxslash}}$ are our types Va, V, Vb, respectively.
        \item $D_{\boxempty \wedge \boxempty}$ and $D_{\boxslash \wedge \boxslash}$ are our types V' and Vb'. Our type Va' falls into the definition of $D_{(\bigcirc,3) \wedge \boxempty}$.
        \item $D_{\boxtimes}$ is our type VI'.
    \end{itemize}
\end{remark}
\section{Proof of \cref{thm:main-theorem}}
\label{appdx:main-proof}

\begin{proof}
    From the preceding lemmas we know that these types are mutation equivalent to the quiver in \eqref{eq:affine-d}, so we need only prove that these types are exhaustive by showing that $\mathcal{M}_{n-1}^{\tilde{D}}$ is closed under quiver mutation.
    
    We will begin with the paired types. Suppose that $Q \in \mathcal{M}_{n-1}^{\tilde{D}}$ is the union of two quivers $Q_1, Q_2 \in \mathcal{M}^D$ whose intersection $\Gamma_c$ is in $\mathcal{M}^A$. If $\Gamma_c \in \mathcal{M}^A_k$ for $k > 1$, then any mutation can affect the type of at most one of $Q_1$ or $Q_2$ and hence by \cref{thm:type-d-classification} results in a quiver of (possibly different) paired type. Thus in what follows we assume that $\Gamma_c$ is a single vertex $c$. Moreover, we need only consider mutating at $c$, since a mutation anywhere else can only convert $Q$ from one paired type to another.

    In the casework below, when a type V quiver has a central cycle of length 3 and only 7 edges, we will refer to it as {\em minimal type V}. Similarly, a minimal type VI quiver is a type VI quiver where both central cycles are length 3. 
    
    \textbf{Type I-I.} Because we assume $\Gamma_c$ is a single vertex $c$, the underlying graph of this quiver is the star graph on 5 vertices, rooted at $c$. Mutating at $c$ depends on the number of arrows to and from $c$. If $c$ has indegree 4 or outdegree 4, then $\mu_c$ simply reverses every arrow. If $c$ has indegree 3 or outdegree 3, then $\mu_c$ produces a minimal Type V quiver. If $c$ has indegree 2 and outdegree 2, then $\mu_c$ produces Type VI (or VI', since they are the same when there are only 5 vertices).
    
    \textbf{Type I-II.} Let $c'$ denote the connecting vertex opposite $c$ in the block $B_{\mathrm{IV}}$ in the Type II subquiver, and $a, b$ denote the endpoints of the Type I arrows. Then if $Q$ contains the paths $a \to c \to c'$ and $b \to c \to c'$ or $c' \to c \to a$ and $c' \to c \to b$, then $\mu_c$ yields another Type I-II quiver. If $Q$ has $c \to c'$ with $c \to a$ and $c \to b$, or if $Q$ has $c' \to c$ with $a \to c$ and $b \to c$ the result is Type VI'. If $Q$ has $a \to c$ and $c \to b$ (or vice versa) then the result is Type V (regardless of the orientation of the $B_{\mathrm{IV}}$ block.
    
    \textbf{Type I-III.} If the Type I arrows are of the same orientation with respect to $c$, then $\mu_c$ results in a quiver of Type V. Otherwise, the result is Type VI.
    
    \textbf{Type I-IV.} If the Type I arrows are of the same orientation with respect to $c$, then $\mu_c$ results in a quiver of Type V. Otherwise, the result is Type VI.
    
    \textbf{Type II-II.} Let $c'$, $c''$ denote the connecting vertices opposite $c$ in the $B_{\mathrm{IV}}$ blocks in the Type II subquivers $Q_1$ and $Q_2$, respectively. Then if the arrows are oriented $c' \to c \to c''$ (or the reverse) then $\mu_c$ yields a quiver of Type VI' where the connecting vertex $c$ is glues the $B_{\mathrm{V}}$ block to an oriented triangle. Otherwise $\mu_c$ yields another Type II-II quiver.
    
    \textbf{Type II-III.} Mutating at $c$ yields a Type V quiver (regardless of the $c-c'$ orientation).
    
    \textbf{Type II-IV.} Mutating at $c$ yields a Type V quiver (regardless of the $c-c'$ orientation).

    \textbf{Type III-III.} Mutating at $c$ yields a Type VI quiver with two central cycles of length 3.
    
    \textbf{Type III-IV.} Mutating at $c$ yields a Type VI quiver.

    \textbf{Type IV-IV.} Mutating at $c$ yields a Type VI quiver.
    
    Having finished the paired types, we turn our attention to our newly identified types.
    
    \textbf{Type V.} In the proof of \cref{lem:type-v}, we showed that mutating at $a$ stays in $\mathcal{M}_{n-1}^{\tilde{D}}$, producing a quiver of paired type in all cases. In~\cref{sec:central_cycle}, we showed that mutating at $d$ (and $d'$ by symmetry) produces a Type Va quiver.
    If we mutate at $b$, we bifurcate into the same cases as when mutating at $a$ in \cref{lem:type-v}, and in fact obtain the same types. Now, if the central cycle is of length > 3, then we are done, as mutating anywhere along the central cycle will simply shrink the central cycle by 1, leaving us with another quiver of Type V. However, suppose the central cycle is of length 3, given by $a \xrightarrow{\alpha} b \to v \to a$. Then we must consider $\mu_v$, which yields Type V'. (In this manner, Type V' can be seen as the result of shrinking the central cycle to length 2, which we then remove because digons are prohibited.)
    Here the subquiver $\Gamma$ is a single vertex if there are no additional spikes, a single directed edge if there is one spike, and an oriented triangle if there are two spikes.
    
    \textbf{Type Va.} We know from~\cref{sec:central_cycle} that $\mu_d$ and $\mu_{d'}$ yield Types V and Vb, respectively. Continuing, $\mu_a$ and $\mu_b$ both yield Type VI. If the central cycle (sans $\alpha$) is length > 3, then we are done. Otherwise, we consider the case where we have a vertex $v$ with $b \to v \to a$ and compute $\mu_v$, which we see yields Type Va'.
    
    \textbf{Type Vb.} From~\cref{sec:central_cycle} we see that $\mu_d$ and $\mu_d'$ yield Type Va. Mutating at $a$ or $b$ yields Type Vb again, simply moving the reversed arrow around the central cycle with the associated $d, d'$. Finally, we are done unless the central cycle is an unoriented triangle $a \leftarrow b \to v \to a$, in which case we must consider $\mu_v$, which yields Type Vb'.
    
    \textbf{Type V'.} We know the mutations $\mu_d$ and $\mu_{d'}$ yield Type Va'. The mutations $\mu_a$ and $\mu_b$ yield Type VI'. Finally, we know that mutating at $c$ yields Type V from \cref{cor:type-v'} (see~\cref{fig:lem-type-v'}).
    
    \textbf{Type Va'.} As we have seen, $\mu_d$ yields Type V' and $\mu_{d'}$ yields Type Vb'.
    The mutations $\mu_a$ and $\mu_b$ both result in Type Va' by cyclically permuting $(a, b, d)$ forwards and backwards, respectively. Finally, $\mu_c$ yields Type Va.
    
    \textbf{Type Vb'.} The mutations $\mu_d$ and $\mu_{d'}$ yield Type Va'. Mutating at $a$ or $b$ produces an automorphism which swaps $a$ and $d$ with $b$ and $d'$, respectively, so $\mu_a$ and $\mu_b$ yield Type Vb' again. Finally, mutating at $c$ yields Type Vb.
    
    \textbf{Type VI.} From \cref{lem:type-vi} we know that $\mu_c$ yields a paired type. Call the two central cycles $C_1$ and $C_2$. If we mutate at any vertex which is not adjacent to $c$, the result is still Type VI, as the mutation affects the relevant cycle $C_1$ or $C_2$ as a Type IV subquiver, and cannot break $C_1$ or $C_2$. Suppose then that we mutate at a vertex $v$ which is adjacent to $c$ and suppose, without loss of generality, that $v \in C_1$. Then $\mu_v$ simply moves $v$ from $C_1$ to $C_2$, resulting in Type VI, unless $C_1$ is a triangle, in which case the result is Type Va.
    
    \textbf{Type VI'.} Since $c$ is a connecting vertex in $\Gamma$, it has degree at most 2. If $c$ has degree 1 in $\Gamma$, $\mu_c$ yields Type I-II, and if $c$ has degree $2$ in $\Gamma$, $\mu_c$ yields Type II-II. Any other mutation in $\Gamma$ cannot change the type, so it remains only to check the vertices adjacent to $c$. Mutating at any results in Type V'.
    
    Thus we have shown that $\mathcal{M}_{n-1}^{\tilde{D}}$ is closed under quiver mutation. This completes the proof.
\end{proof}

\section{Additional figures}
\label{appdx:additional-figures}

\input{tikz-figures/paired-types}

\begin{figure}[b]
    \centering
    \includegraphics[width=.9\textwidth]{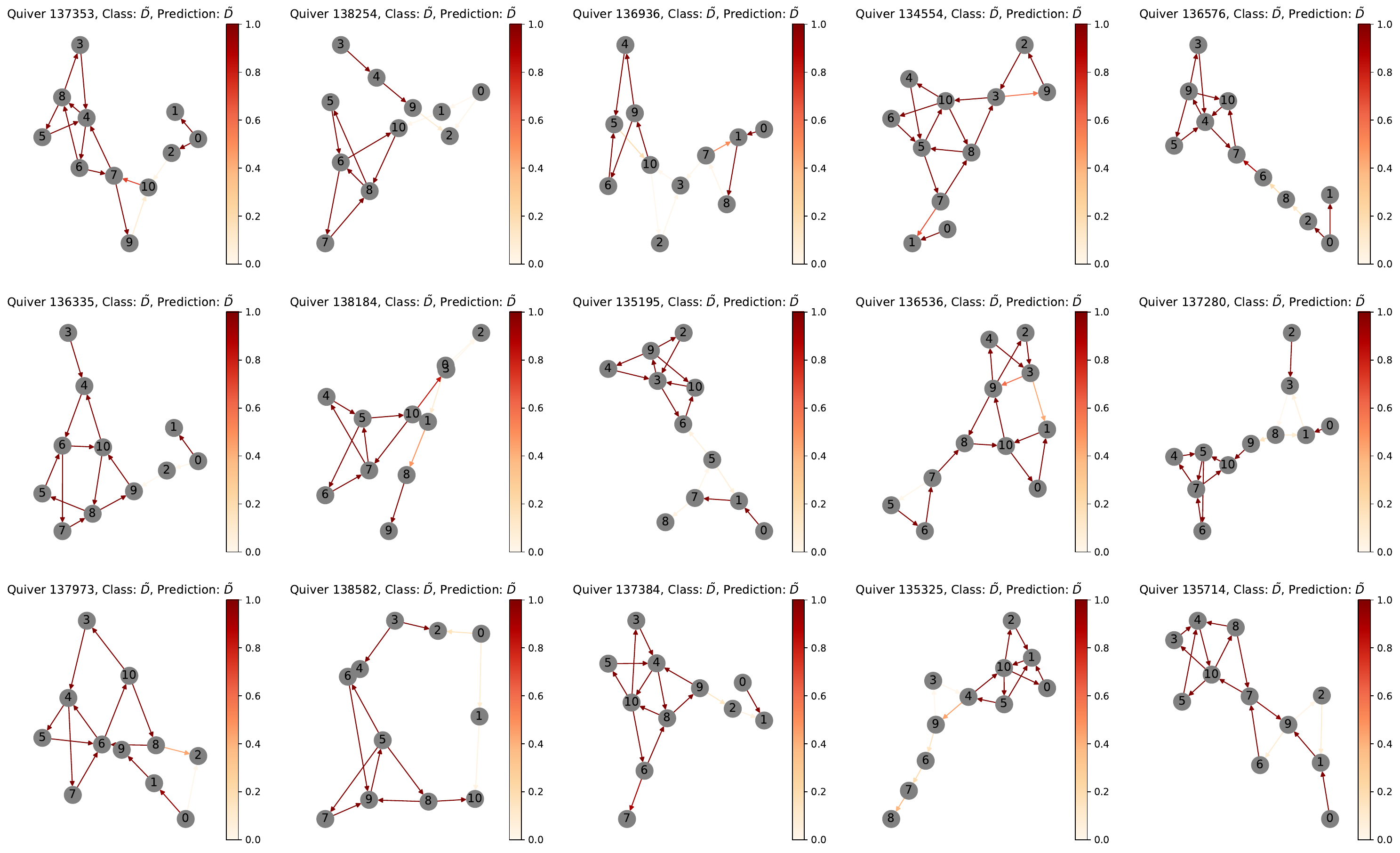}
    \caption{Randomly selected quivers from the orange (left) cluster in \cref{fig:affine-d-clusters}, consisting of quivers of Types V, Va, Vb, V', Va', Vb'.}
    \label{fig:cluster-v-examples}
\end{figure}

\begin{figure}[b]
    \centering
    \includegraphics[width=.9\textwidth]{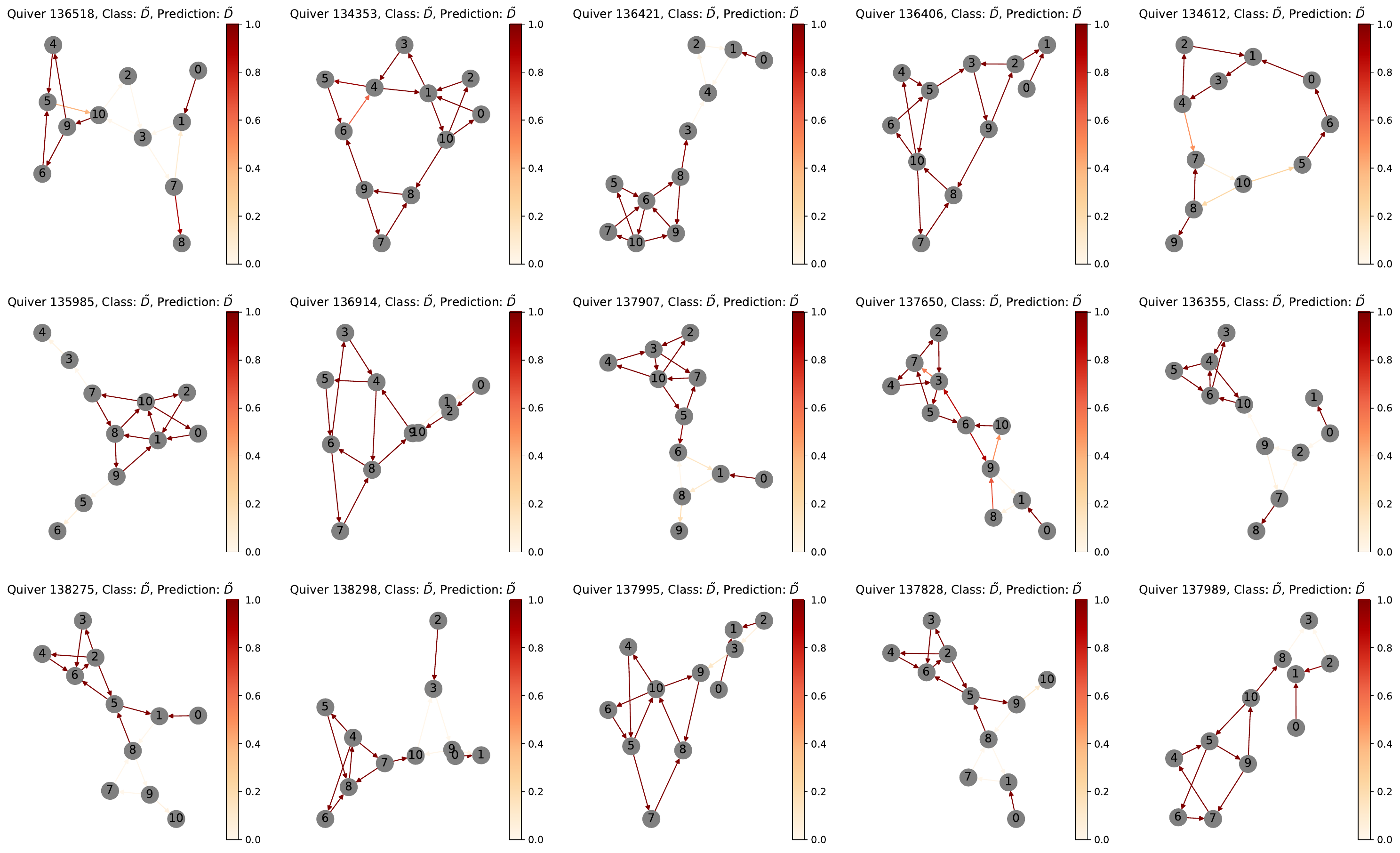}
    \caption{Randomly selected quivers from the blue (right) cluster in \cref{fig:affine-d-clusters}, which consists of quivers of Types VI and VI'.}
    \label{fig:cluster-vi-examples}
\end{figure}


\end{document}